\newcommand{\cmark}{\text{\ding{51}}}%
\newcommand{\xmark}{\text{\ding{55}}}%
\definecolor{LightOrange}{rgb}{1,0.85,0.8}
\definecolor{LightPurple}{rgb}{1.0,0.80,0.95}
\definecolor{LightGreen}{rgb}{0.93,0.98,0.96}
\colorlet{LightGray}{gray!20}
\def\red{\color{Red}}
\def\green{\color{Green}}
\newcommand*\samethanks[1][\value{footnote}]{\footnotemark[#1]}
\newcommand{\around}{{\raise.17ex\hbox{$\scriptstyle\sim$}}}
\newcommand{\tablestyle}[2]{\setlength{\tabcolsep}{#1}\renewcommand{\arraystretch}{#2}\centering\footnotesize}
\newcolumntype{x}[1]{>{\centering\arraybackslash}p{#1pt}}
\newcommand{\app}{\raise.17ex\hbox{$\scriptstyle\sim$}}
\def\x{\times}
\newlength\savewidth\newcommand\shline{\noalign{\global\savewidth\arrayrulewidth
  \global\arrayrulewidth 1pt}\hline\noalign{\global\arrayrulewidth\savewidth}}
\renewcommand\paragraph{\@startsection{paragraph}{4}{\z@}
  {.5em \@plus1ex \@minus.2ex}{-.5em}{\normalfont\normalsize\bfseries}}\makeatother
\def\fig#1{Fig.~\ref{fig:#1}}
\newcommand{\tb}[3]{\setlength{\tabcolsep}{#2mm}\begin{tabular}{#1}#3\end{tabular}}
\newtheorem{observation}[theorem]{Observation}
\newcommand{\ol}[3]{\begin{#1}[leftmargin=*,topsep=0pt]\setlength{\itemsep}{#2mm}#3\end{#1}}
\newcommand{\olp}[3]{\begin{#1}[leftmargin=\parindent,topsep=0pt]\setlength{\itemsep}{#2mm}#3\end{#1}}
\newcommand\newsubcap[1]{\phantomcaption%
       \caption*{\figurename~\thefigure(\thesubfigure). #1}}
\def\tablecite#1#{%
  \def\pretablecite{#1}%
  \tableciteaux}
\def\tableciteaux#1{%
  \textsuperscript{\expandafter\originalcite\pretablecite{#1}}%
}
\def\fontsmall#1#{
    \fontsize{8}{12}
    #1
    \selectfont
}
\def\tablefontsmall#1#{
    \fontsize{8}{10}
    #1
    \selectfont
}
\def\tablefont#1#{
    \fontsize{9}{12}
    #1
    \selectfont
}
\let\vec\mathbf
\def\eqref#1{equation~\ref{#1}}
\def\1{\bm{1}}
\DeclareMathAlphabet{\mathsfit}{\encodingdefault}{\sfdefault}{m}{sl}
\SetMathAlphabet{\mathsfit}{bold}{\encodingdefault}{\sfdefault}{bx}{n}
\def\sA{{\mathbb{A}}}
\def\sD{{\mathbb{D}}}
\def\sS{{\mathbb{S}}}
\def\sV{{\mathbb{V}}}
\newcommand{\normltwo}{L^2}
\DeclareMathOperator*{\argmax}{arg\,max}
\DeclareMathOperator*{\argmin}{arg\,min}
\begin{document}

\pagestyle{headings}
\mainmatter
\def\ECCVSubNumber{2575}  

\title{Unsupervised Selective Labeling for 
More Effective Semi-Supervised Learning}

\titlerunning{Unsupervised Selective Labeling}

\author{Xudong Wang\samethanks[1]\orcidID{0000-0002-4973-780X} \and
Long Lian\thanks{Equal contribution}\orcidID{0000-0001-6098-189X} \and
Stella X. Yu\orcidID{0000-0002-3507-5761}}
\authorrunning{X. Wang et al.}

\institute{UC Berkeley / ICSI
}
\maketitle

\begin{abstract}

Given an unlabeled dataset and an annotation budget, 
we study how to selectively label a fixed number of instances so that semi-supervised learning (SSL) on such a partially labeled dataset is most effective.  We focus on \textit{selecting} the right data to label, in addition to usual SSL's propagating labels from labeled data to the rest unlabeled data.
This instance selection task is challenging, as without any labeled data we do not know what the objective of learning should be.  Intuitively, no matter what the downstream task is, instances to be labeled must be {\it representative} and {\it diverse}:  The former would facilitate label propagation to unlabeled data, whereas the latter would ensure coverage of the entire dataset.
We capture this idea by selecting cluster prototypes, either in a pretrained feature space, or along with feature optimization, both without labels.
Our unsupervised selective labeling consistently improves SSL methods over state-of-the-art 
active learning given labeled data, by $8\app25\times$ in label efficiency. For example, it boosts FixMatch by 
10\% (14\%) in accuracy on CIFAR-10 (ImageNet-1K) with 0.08\% (0.2\%) labeled data,
 demonstrating that small computation spent on selecting what data to label brings significant gain especially under a low annotation budget.  Our work sets a new standard for practical and efficient SSL.

\keywords{semi-supervised learning \and unsupervised selective labeling}

\end{abstract}
\section{Introduction}

\def\smaller#1#{
    \fontsize{8.7}{11}
    #1
    \selectfont
}

\def\figSSLTask#1{
\begin{figure}[#1]
\centering
\includegraphics[width=0.98\linewidth]{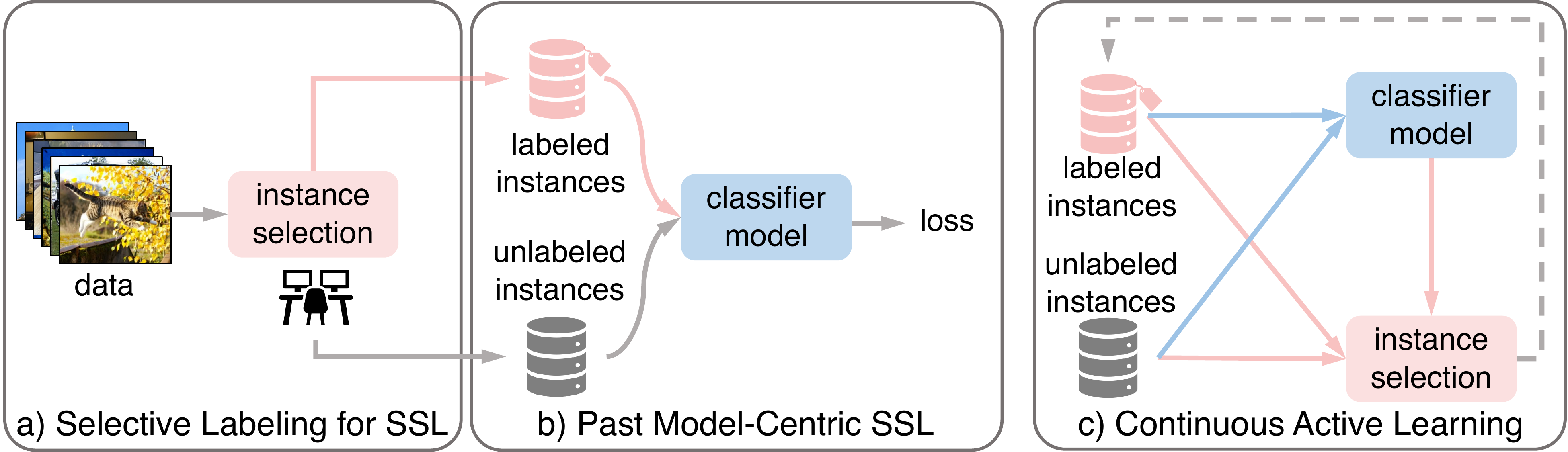}
\caption{
Our unsupervised selective labeling is a novel aspect of semi-supervised learning (SSL) and different from active learning (AL).
{\bf a, b)} Existing SSL methods focus on optimizing the model {\it given} labeled and unlabeled data. 
Instead of such model-centric learning, we focus on optimizing the selection of training instances {\it prior to} their label acquisition. 
{\bf c)} Existing AL methods alternate between classifier learning and instance selection, leveraging a classifier trained on initial labeled data and regularized on unlabeled data.
In contrast, we select instances from unlabeled data without knowing the classification task.
\label{fig:teaser-task}
}
\end{figure}
}

\def\figTeaserTwo#1{
    \begin{figure*}[#1]\centering
\tb{@{}ccc@{}}{1}{
\tb{c}{0}{
\includegraphics[width=0.235\textwidth,height=0.25\textheight]{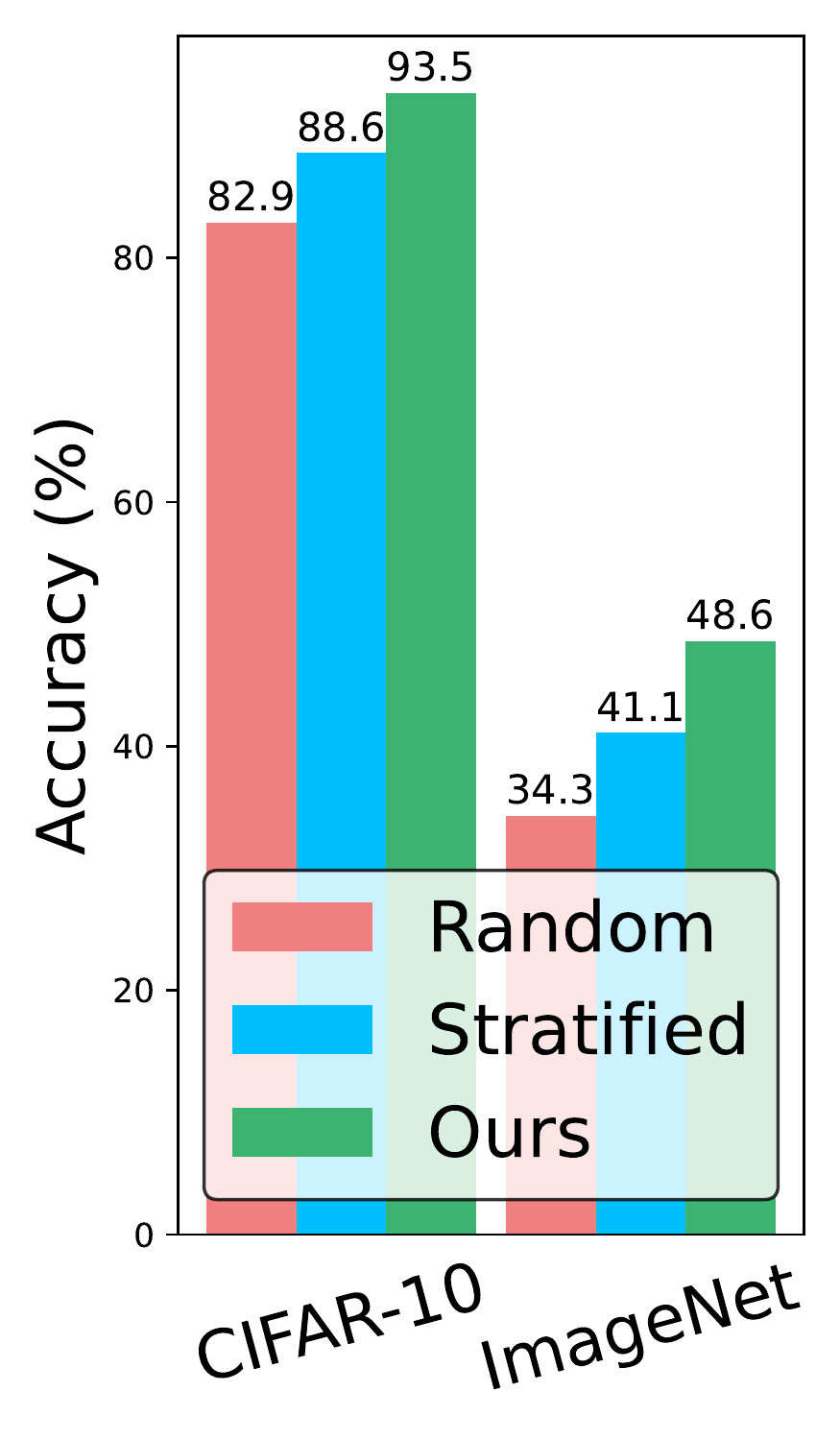}\\
{\bf a)} accuracy\\
} & 
\tb{c}{0}{
\includegraphics[width=0.24\textwidth,height=0.11\textheight]{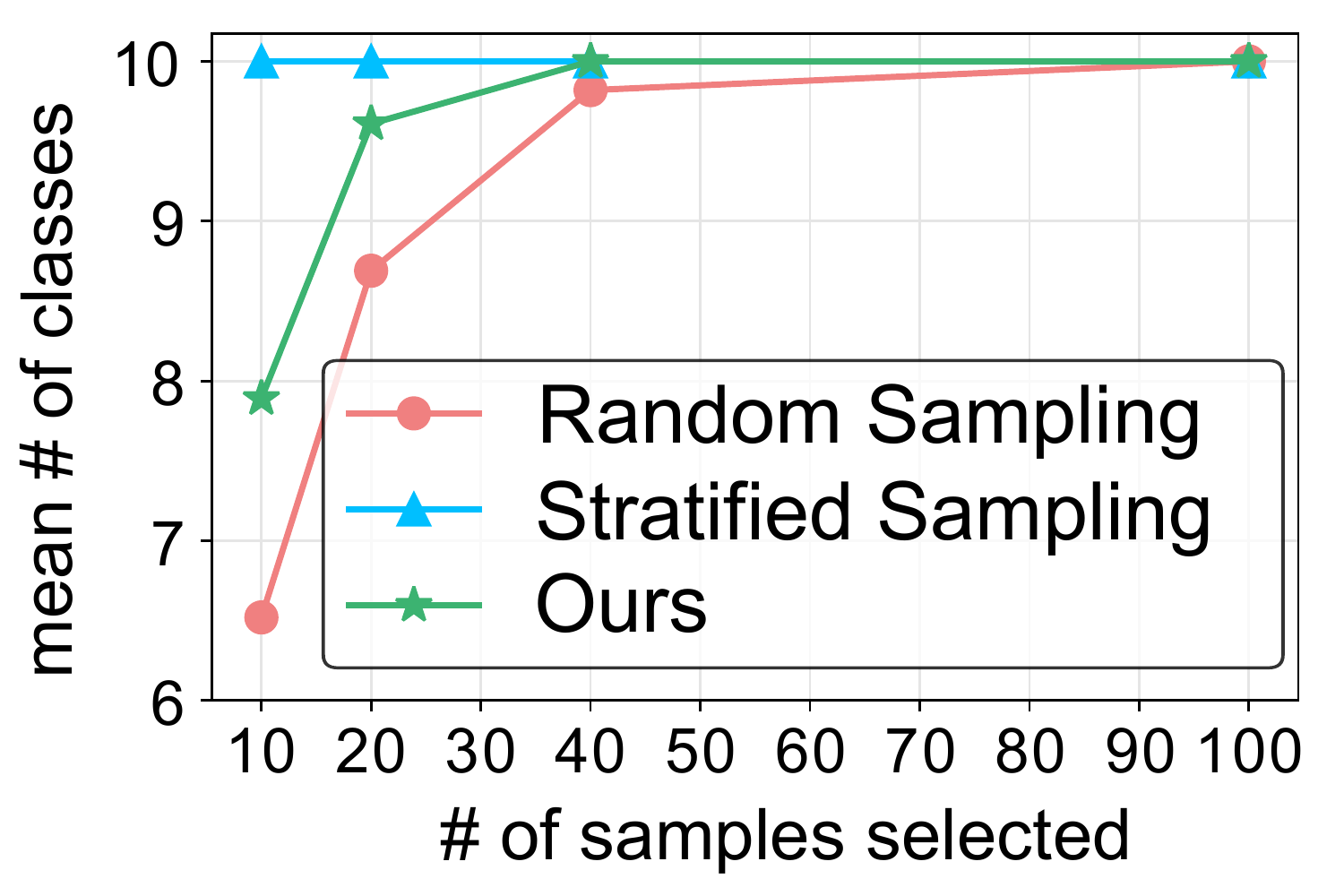}\\
{\bf b)} class coverage\\[2pt]
\includegraphics[width=0.24\textwidth,height=0.11\textheight]{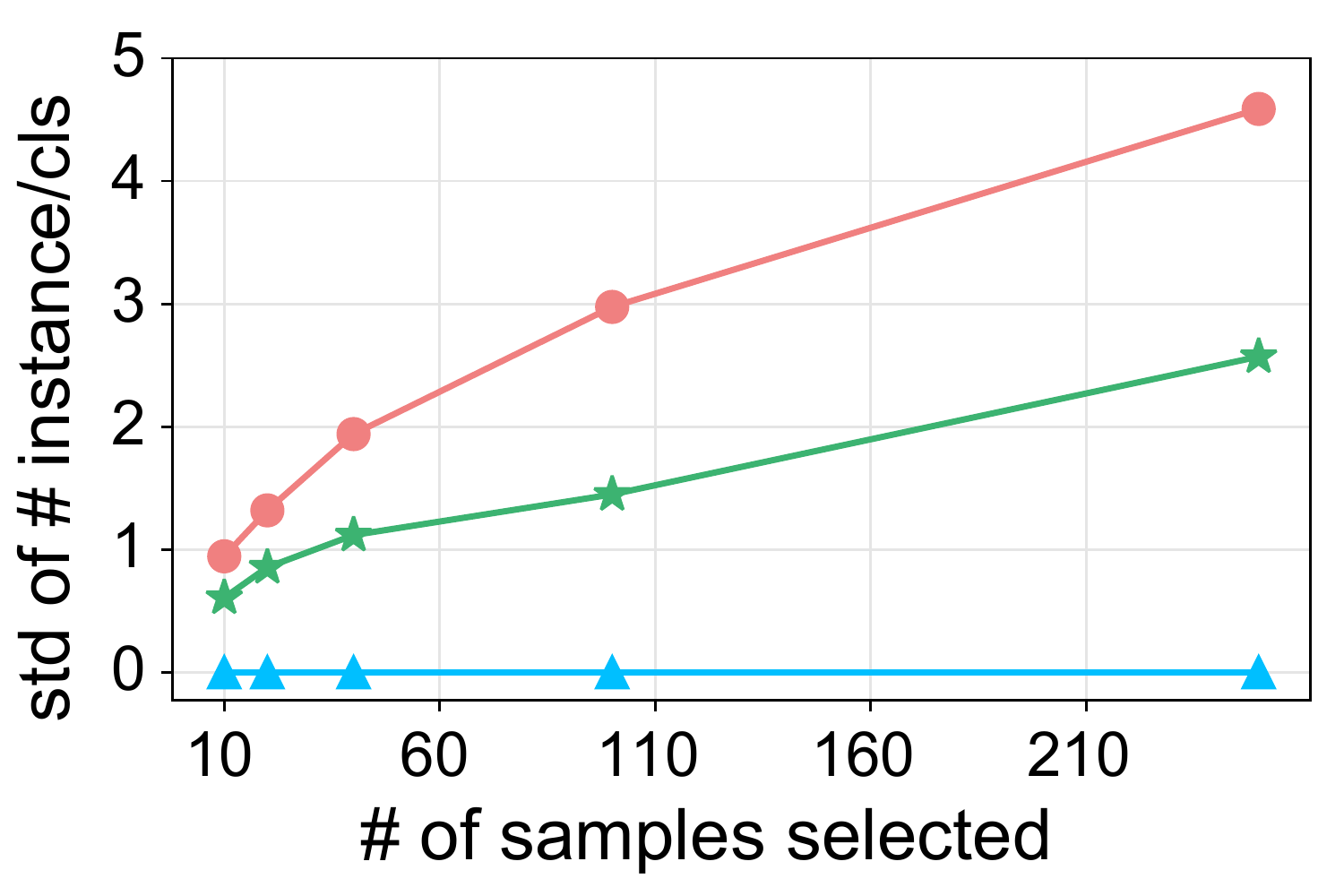}\\
{\bf c)} data balance
} & 
\tb{c}{0}{
\includegraphics[width=0.5\textwidth,height=0.23\textheight]{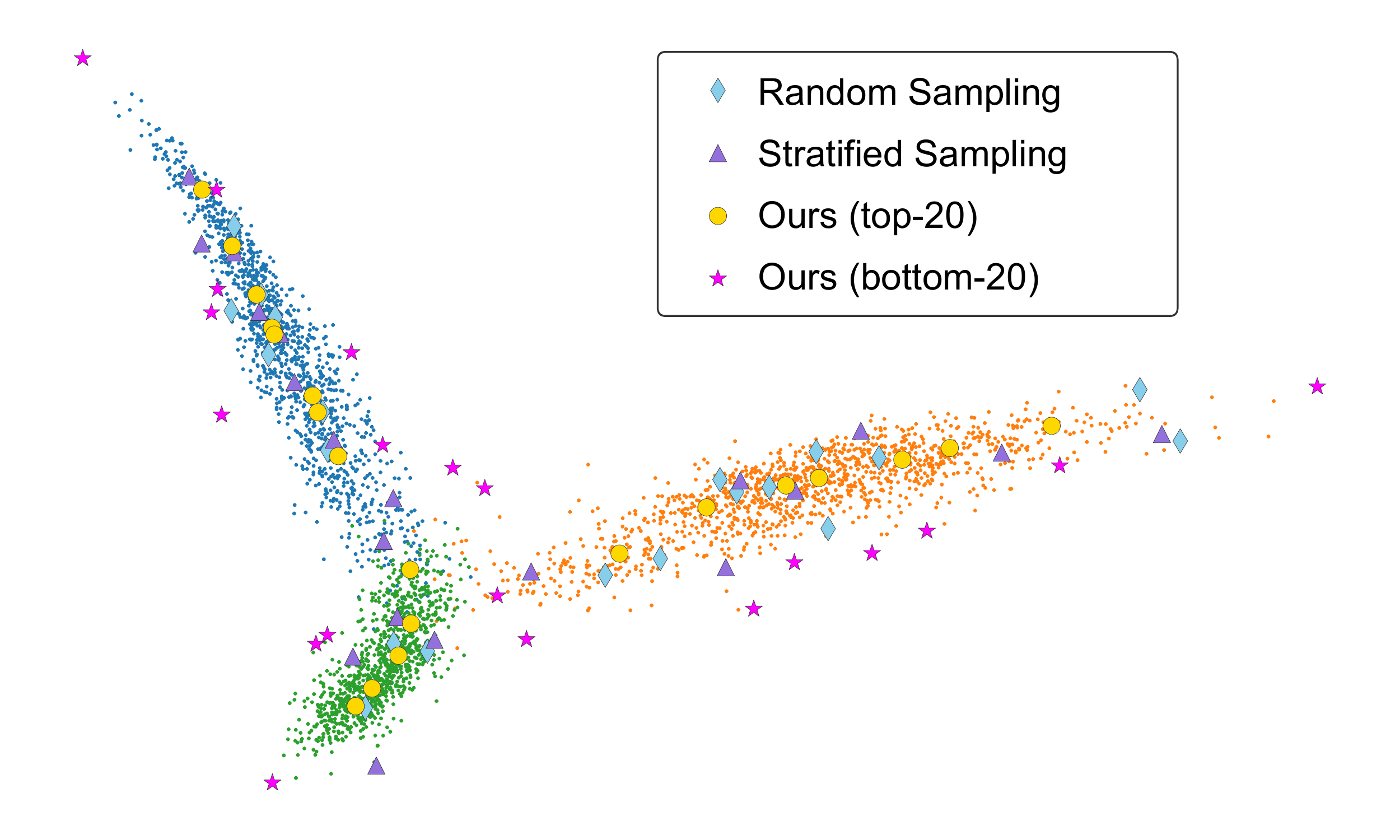}\\[9pt]
{\bf d)} informativeness
}\\
}
\caption{Our instance selection outperforms random and stratified sampling by  selecting a diverse set of representative instances. {\bf a)} The classification accuracy using SSL method FixMatch increases with our selectively labeled instances.  {\bf b)} Our method covers all the semantic classes with only a few instances.  {\bf c)} Our selection is far more balanced than random sampling.  {\bf d)} On a toy dataset of 3 classes in ImageNet, our top-ranked instances  cover informative samples across the entire space, whereas our bottom-ranked instances tend to be outliers. 
        }
        \label{fig:teasertwo}
    \end{figure*}
}

Deep learning's success on natural language understanding \cite{collobert2011natural}, visual object recognition \cite{krizhevsky2012imagenet}, and object detection \cite{girshick2014rich} follow a straightforward recipe: better model architectures, more data, and scalable computation \cite{lecun2015deep, goodfellow2016deep, hestness2017deep, xie2020self}.  As training datasets get bigger, their full task annotation becomes infeasible \cite{berthelot2019remixmatch, sohn2020fixmatch}.

Semi-supervised learning (SSL) deals with learning from both a small amount of labeled data {\it and} a large amount of unlabeled data: Labeled data directly supervise model learning, whereas unlabeled data help learn a desirable model that makes consistent \cite{sajjadi2016regularization, tarvainen2017mean, xie2020unsupervised, lee2013pseudo, berthelot2019mixmatch, berthelot2019remixmatch, sohn2020fixmatch, wang2022debiased} and unambiguous \cite{grandvalet2005semi, lee2013pseudo, berthelot2019mixmatch} predictions.  

Recent SSL methods approach fully supervised learning performance with a very small fraction of labeled data.  For example, on ImageNet, 
SSL with 1\% labeled data, i.e., only 13 instead of around 1300 labeled images per class, captures 95\% (76.6\% out of 80.5\% in terms of top-1 accuracy)  of supervised learning performance with 100\% fully labeled data \cite{chen2020big}.

The lower the annotation level, the more important what the labeled instances are to SSL.  While a typical image could represent many similar images, an odd-ball only represents itself, and labeled instances may even cover only part of the data variety, trapping a classifier in partial views with unstable learning and even model collapse.



\figSSLTask{t!}
\figTeaserTwo{bt}

A common assumption in SSL is that labeled instances are sampled randomly either over all the available data or over individual classes, the latter known as stratified sampling \cite{berthelot2019mixmatch,berthelot2019remixmatch, sohn2020fixmatch, wang2022debiased}. Each method has its own caveats: Random sampling can fail to cover all semantic classes and lead to poor performance and instability,
whereas stratified sampling is utterly unrealistic: If we can sample data by category, we would already have the label of every instance!

Selecting the right data to label for the sake of model optimization is not new. In fact, it is the focus of active learning (AL): Given an initial set of labeled data, the goal is to select an additional subset of data to label (\fig{teaser-task}) so that a model trained over such partially labeled data approaches that over the fully labeled data \cite{sener2017active,ducoffe2018adversarial,yoo2019learning}.  Unlabeled data can also be exploited for model training by combining AL and SSL,  resulting in a series of methods called semi-supervised active learning (SSAL).

However, existing AL/SSAL methods have several shortcomings.
\ol{enumerate}{0}{
\item They often require randomly sampled labeled data to begin with, which is sample-inefficient in low labeling settings that SSL methods excel at \cite{chan2021marginal}. 
\item AL/SSAL methods are designed with human annotators in a loop, working in multiple rounds of labeling and training. This could be cumbersome in low-shot scenario and leads to large labeling overhead.
\item AL's own training pipeline with a human-in-the-loop design makes its integration into existing SSL code implementation hard  \cite{song2019combining}.
\item The requested labels are tightly coupled with the model being trained so that labels need to be collected anew every time a model is trained with AL/SSAL.
}

We address {\it unsupervised selective labeling} for SSL (\fig{teaser-task}), in stark contrast with supervised data selection for AL, which is conditioned on an initial labeled set and for the benefit of a certain task.  Given only an annotation budget and an unlabeled dataset, among many possible ways to select a fixed number of instances for labeling, which way would lead to the best SSL model performance when it is trained on such partially labeled data? 

Our instance selection task is challenging, as without any labeled data we do not know what the objective of learning should be.  Intuitively, no matter what the downstream task is, instances to be labeled must be {\it representative} and {\it diverse}:  The former would facilitate label propagation to unlabeled data, whereas the latter would ensure coverage of the entire dataset.
We capture this idea by selecting cluster prototypes, either in a pretrained feature space, or along with feature optimization, both without labels.

Our pipeline has three steps: 
1) Unsupervised feature learning that maps data into a discriminative feature space.
2) Select instances for labeling for maximum representativeness and diversity, without or with additional optimization.
3) Apply SSL (e.g., \cite{sohn2020fixmatch,chen2020big}) to the labeled data and the rest unlabeled data.

\fig{teasertwo} shows that our method has many benefits over random or stratified sampling for labeled data selection, in terms of accuracy, coverage, balance over classes, and  representativeness.  As it selects informative instances without initial labels, it can not only integrate readily into existing SSL methods, but also
achieve higher label efficiency than SSAL methods.
While most AL/SSAL methods only work on small-scale datasets such as CIFAR\cite{krizhevsky2009learning}, our method scales up easily to large-scale datasets such as ImageNet\cite{ILSVRC15}, taking less than an hour for our data selection on a commodity GPU server.

Our work sets a new standard for practical SSL with these contributions.
\olp{enumerate}{0}{
\item We systematically analyze the impact of different selective labeling methods on SSL under low-label settings, a previously ignored aspect of SSL. 
\item We propose two unsupervised selective labeling methods that capture representativeness and diversity without or along with feature optimization.
\item We benchmark extensively on our data selection with various SSL methods, delivering much higher sample efficiency over sampling in SSL or AL/SSAL.
\item We release our toolbox with AL/SSL implementations and a unified data loader, including benchmarks, selected instance indices, and pretrained models that combine  selective labeling with various methods for fair comparisons. 
}

\def\figRegularization#1{
\begin{figure}[#1]\centering
\tb{@{}ccc@{}}{1}{
\includegraphics[width=0.33\linewidth]{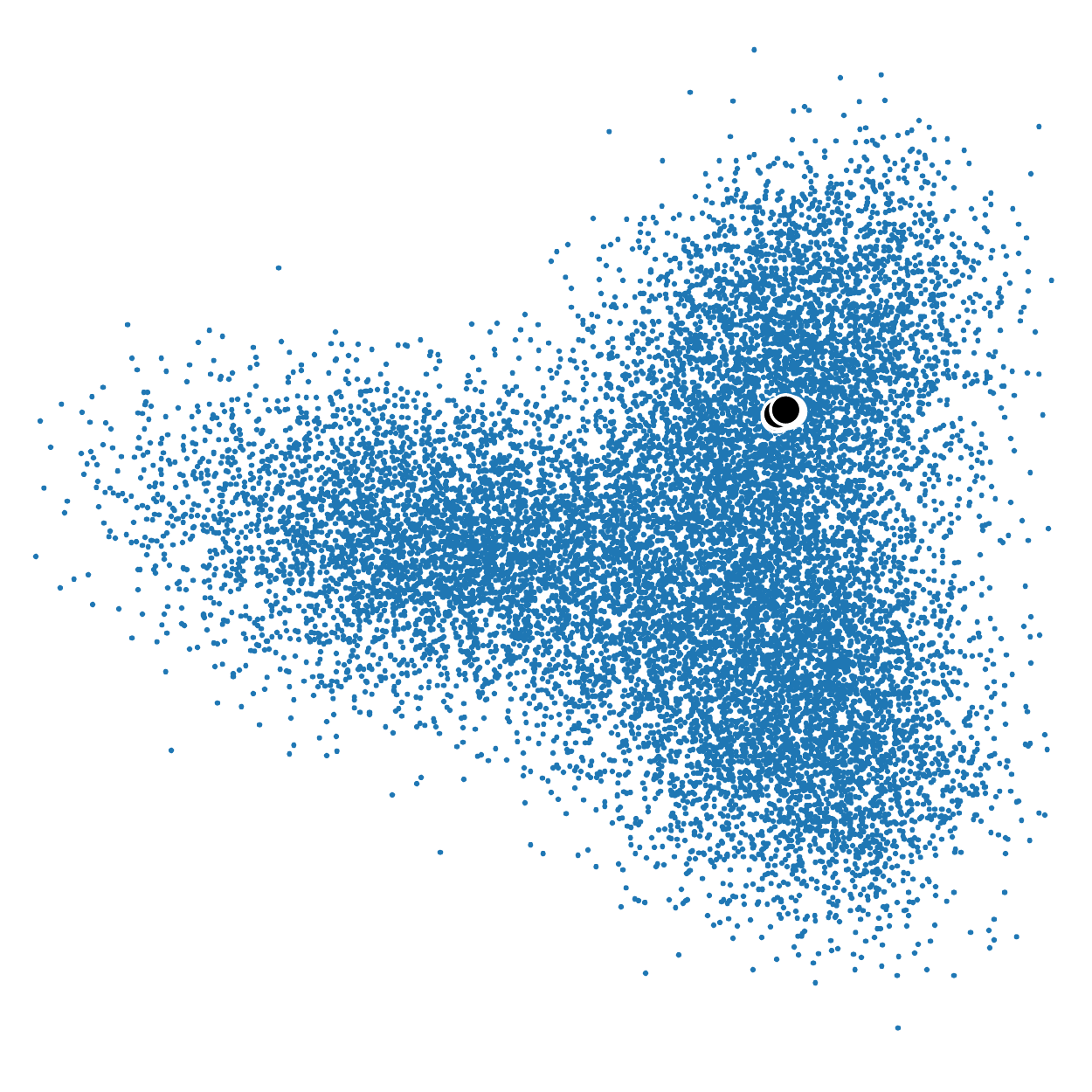} & 
\includegraphics[width=0.33\linewidth]{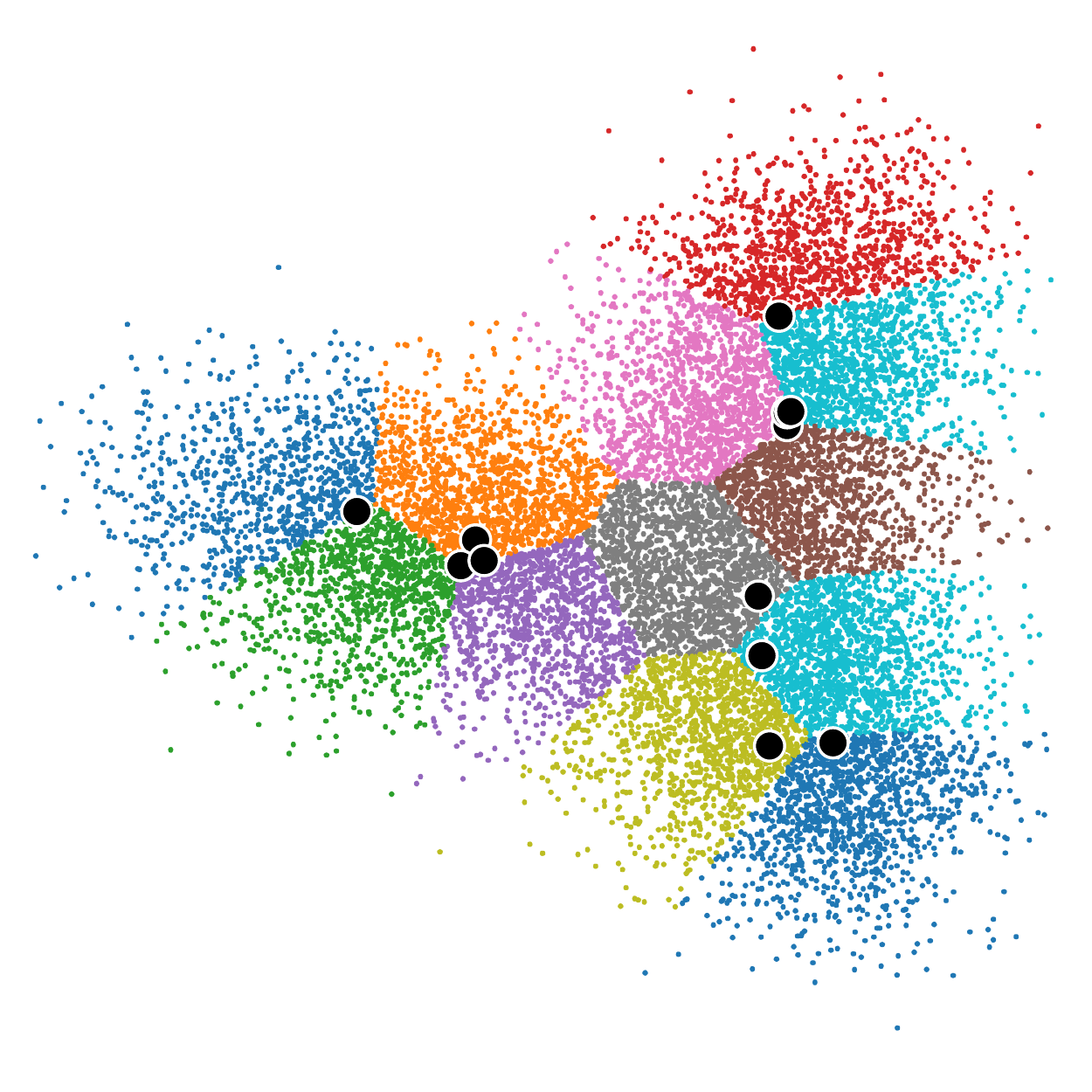}&
\includegraphics[width=0.33\linewidth]{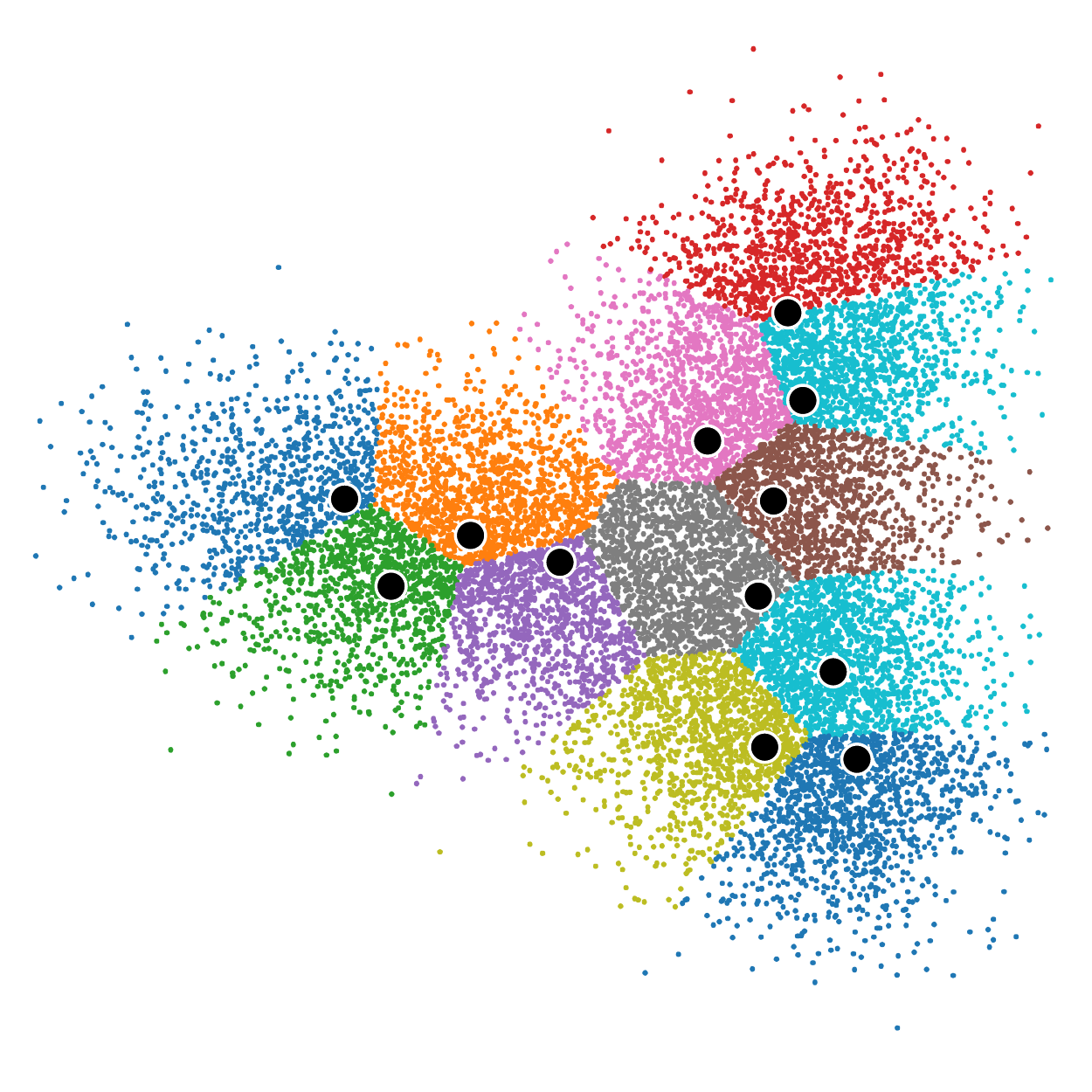}\\
{\bf a)} local only & 
{\bf b)} local $+$ global &
{\bf c)} local $+$ global $+$ reg.\\ 
}
\caption{
{\bf a)} Points at density peaks are individually representative of their local neighborhoods, but lack broad coverage of the entire set.
{\bf b)} Hard constraint by $K$-Means greatly depends on clustering quality and only partially alleviates the problem.
{\bf c)} Soft regularization leads to more uniform and diversified queries.
}
\label{fig:regularization}
\end{figure}
}

\def\tabComparison#1{
    \begin{table}[#1]\centering
    \tablestyle{3pt}{1.0}
    \normalsize
    \tablefontsmall{
    \begin{tabular}{l||c|c|c|c}
    \shline
    Property                        & ~\makecell{Semi-supervised \\Learning}~             & ~\makecell{Active \\Learning}~                & ~\makecell{Semi-supervised \\Active Learning}~          & ~Ours~ \\ \hline
    Uses no initial random labels & {\red{$\xmark$}} & \red{$\xmark$}   & \red{$\xmark$}   & \green{$\cmark$} \\ \hline
    Actively queries for labels            & \red{$\xmark$}    & \green{$\cmark$} & \green{$\cmark$} & \green{$\cmark$} \\ \hline
    Requires annotation only once          & \green{$\cmark$}  & \red{$\xmark$}   & \red{$\xmark$}   & \green{$\cmark$} \\ \hline
    Leverages unlabeled data               & \green{$\cmark$}  & \red{$\xmark$}   & \green{$\cmark$} & \green{$\cmark$} \\ \hline
    Allows label reuse across runs          & \green{$\cmark$} & \red{$\xmark$}   & \red{$\xmark$} & \green{$\cmark$} \\
    \shline
    \end{tabular}
    }
    \caption{Key properties of SSL, AL, SSAL, and our USL/USL-T pipelines. 
    Among them, our approach is the only one that does not use any random labels. 
    }
    \label{tab:comparison}
    \end{table}
}

\section{Selective Labeling for Semi-supervised Learning}
\label{method}

Suppose we are given an unlabeled dataset of $n$ instances and an annotation budget of $m$.  Our task is to select $m$ ($m\!\ll\!n$) instances for labeling,  so that a SSL model trained on such a partially labeled dataset, with $m$ instances labeled and $n\!-\!m$ unlabeled, produces the best classification performance.

Formally, let $\sD = \{(x_i, y_i)\}_{i=1}^n$ denote $n$ pairs of image $x_i$ and its ({\it unknown}) class label $y_i$.  Let $\sA$ denote a size-$m$ subset of $\sD$ with {\it known} class labels.  Our goal is 
to select $\sA \!\subset\! \sD$ for acquiring class labels,  in order to maximize the performance of a given SSL model trained on labeled data $\sA$ and unlabeled data $\sD \!\setminus\! \sA$.

Our unsupervised selective labeling is challenging, as we do not have any labels to begin with, i.e., we don't know what would make the SSL model perform the best.  Our idea is to select $m$ instances that are not only \textit{representative} of most instances, but also \textit{diverse} enough to broadly cover the entire dataset, so that we do not lose information prematurely before label acquisition.

Our SSL pipeline with selective labeling consists of three steps: \textbf{1)} unsupervised feature learning; \textbf{2)} unsupervised instance selection for annotation; \textbf{3)} SSL on selected labeled data $\sA$ and remaining unlabeled data $\sD\!\setminus \sA$.

We propose two selective labeling methods in Step 2, training-free Unsupervised Selective Labeling (USL) and training-based Unsupervised Selective Labeling (USL-T), both aiming at selecting cluster prototypes in a discriminative feature space without label supervision.

\subsection{Unsupervised Representation Learning}
\label{sec:unsup_rep_learning}
Our first step is to obtain lower-dimensional and semantically meaningful features with unsupervised contrastive learning \cite{wu2018unsupervised,oord2018representation,he2020momentum,chen2020simple}, which maps $x_i$ onto a $d$-dimensional hypersphere with $\normltwo$ normalization, denoted as $f(x_i)$.
We use MoCov2 \cite{chen2020improved} (SimCLR \cite{chen2020simple} or CLD \cite{wang2021unsupervised}) to learn representations on ImageNet (CIFAR \cite{krizhevsky2009learning}).  See appendix for details.

\setlength{\belowdisplayskip}{3pt} \setlength{\belowdisplayshortskip}{3pt}
\setlength{\abovedisplayskip}{3pt} \setlength{\abovedisplayshortskip}{3pt}

\subsection{Unsupervised Selective Labeling (USL)}
We study the relationships between data instances using a weighted graph, where nodes $\{V_i\}$ denote data instances in the (normalized) feature space $\{f(x_i)\}$, and edges between nodes are attached with weights of pairwise feature similarity \cite{bondy1976graph,deo1975graph,chung1997spectral,shi2000normalized}, defined as $\frac{1}{D_{ij}}$, the inverse of feature distance $D$:
\begin{align}
D_{ij} = {\lVert f(x_i) - f(x_j) \rVert}.
\label{eqn:weight}
\end{align}
Intuitively, the smaller the feature distance, the better the class information can be transported from labeled nodes to unlabeled nodes.
Given a labeling budget of $m$ instances, 
we aim to select $m$ instances that are not only similar to others, but also well dispersed to cover the entire dataset.

\paragraph{{Representativeness}: Select Density Peaks.}
A straightforward approach is to select well connected nodes to spread semantic information to nearby nodes.  It corresponds to finding a density peak in the feature space. The  $K$-nearest neighbor density ($K$-NN) estimation \cite{fix1989discriminatory, orava2011k} is formulated as:
\begin{align}
{p}_{\text{KNN}} (V_i, k) = \frac{k}{n}\frac{1}{A_d\cdot D^d(V_i,V_{k(i)})}
\label{eqn:knn}
\end{align}
where $A_d={\pi^{d/2}}/{\Gamma(\frac{d}{2} + 1)}$ is the volume of a unit $d$-dimensional ball, $d$ the feature dimension, $\Gamma(x)$ the Gamma function, $k(i)$ instance $i$'s $k$th nearest neighbor.  
$p_{\text{KNN}}$ is very sensitive to noise, as it only takes the $k$th nearest neighbor into account. For robustness, we replace the $k$th neighbor distance $D(V_i,V_{k(i)})$ with the average distance $\bar{D}(V_i,k)$ to all $k$ nearest neighbors instead:
\begin{align}
\hat{p}_{\text{KNN}} (V_i, k) = \frac{k}{n}\frac{1}{A_d\cdot \bar{D}^d(V_i,k)}, \quad
\text{ where }
\bar{D}(V_i,k)=\frac{1}{k}\sum_{j=1}^k D(V_i,V_{j(i)}).
\label{eqn:utility}
\end{align}
We use $\hat{p}_{\text{KNN}} (V_i, k)$ to measure the {\it representativeness} of node $V_i$. Since only the relative ordering matters in our selection process, the density peak corresponds to the sample with maximum $\hat{p}_{\text{KNN}} (V_i, k)$ (i.e., maximum $1/\bar{D}(V_i,k)$).

\paragraph{{Diversity}: Pick One in Each Cluster.} While instances of high feature density values are individually representative, a separate criterion is necessary to avoid repeatedly picking similar instances near the same density peaks (Fig. \ref{fig:regularization}a).  
To select $m$ diverse instances that cover the entire unlabeled dataset, we resort to  $K$-Means clustering that partitions $n$ instances into $m(\leq\!n$) clusters, with each cluster represented by its centroid  $c$ \cite{lloyd1982least, forgy1965cluster} and every instance assigned to the cluster of the nearest centroid.  Formally, we seek $m$-way node partitioning $\sS=\{S_1, S_2, ..., S_m\}$ that minimizes the within-cluster sum of squares\cite{kriegel2017black}:
\begin{align}
    \begin{split}
    \!\!\min_{\sS}\sum_{i=1}^{m}\sum_{V\in S_i}\rVert V\!-\!c_i \rVert^2 = \min_{\sS}\sum_{i=1}^m|S_i|\text{Var}(S_i)
    \end{split}
    \label{eqn:kmeans}
\end{align}
It is optimized iteratively with EM \cite{mclachlan2007algorithm} from random initial centroids.  We then pick the most representative instance of each cluster according to Eqn.~\ref{eqn:utility}.

\figRegularization{!t}

\paragraph{Regularization: Inter-cluster Information Exchange.}
So far we use $K$-Means clustering to find $m$ hard clusters, and then choose the representative of each cluster {\it independently}.  This last step is sub-optimal, as instances of high density values could be located along cluster boundaries and close to instances in adjacent regions (Fig. \ref{fig:regularization}b).  We thus apply a regularizer to inform each cluster of other clusters' choices and iteratively diversify selected instances (Fig.~\ref{fig:regularization}c).

Specifically, let $\hat{\sV}^{t} = \{\hat{V}_1^{t},...,\hat{V}_m^{t}\}$ denote the set of $m$ instances selected at iteration $t$,
$\hat{V}_i^{t}$ for clusters $S_i$, where $i\!\in\!\{1,\ldots,m\}$.
For each candidate $V_i$ in cluster $S_i$, the farther it is away from those  in other clusters in $\hat{\sV}^{t-1}$, the more diversity it creates.  We thus minimize the total inverse distance to others in a regularization loss $\text{Reg}(V_i, t)$,  with a sensitivity hyperparameter $\alpha$:  
\begin{align}
\text{Reg}(V_i, t) = \sum_{\hat{V}_j^{t-1}\not\in S_i} \frac{1}{\rVert V_i - \hat{V}_j^{t-1} \rVert^\alpha}.
\end{align}
This regularizer is updated with an exponential moving average:
\begin{align}
\overline{\text{Reg}}(V_i, t) = m_{\text{reg}} \cdot \overline{\text{Reg}}(V_i, t\!-\!1) \!+\!(1\!-\!m_{\text{reg}})\cdot \text{Reg}(V_i, t)
\label{regularization}
\end{align}
where $m_{\text{reg}}$ is the momentum. At iteration $t$, we select instance $i$ of the maximum {\it regularized utility} $U'(V_i, t)$ within each cluster:
\begin{align}
U'(V_i, t) = U(V_i) - \lambda\cdot\overline{\text{Reg}}(V_i, t)
\label{utility-reg}
\end{align}
where $\lambda$ is a hyperparameter that balances diversity and individual representativeness, utility $U(V_i)=1/\bar{D}(V_i,k)$.
In practice, calculating distances between every candidate and every selected instance in $\hat{\sV}^{t-1}$ is no longer feasible for a large dataset,
so we only consider $h$ nearest neighbors in $\hat{\sV}^{t-1}$.  
$\hat{\sV}^{t}$ at the last iteration is our final selection for labeling.

\subsection{Training-Based Unsupervised Selective Labeling (USL-T)}
Our USL is a simple yet effective \textit{training-free} approach to selective labeling. Next we introduce an end-to-end \textit{training-based} Unsupervised Selective Labeling (USL-T), an alternative that integrates instance selection into representation learning and often leads to more balanced (Fig. \ref{fig:balance}) and more label-efficient (Table \ref{tab:cifar-10-al-ssal}) instance selection. 
The optimized model implicitly captures semantics and provides a strong initialization for downstream tasks (Sec. \ref{uslt_representation_learning}).

\paragraph{Global Constraint via Learnable $K$-Means Clustering.}
Clustering in a given feature space is not trivial (Fig. \ref{fig:regularization}c).  We introduce a better alternative to $K$-Means clustering that jointly learns both the cluster assignment and the feature space for unsupervised instance selection.  


Suppose that there are $C$ centroids initialized randomly.  For instance $x$ with feature $f(x)$, we infer one-hot cluster assignment distribution $y(x)$ by finding the closest {\it learnable} centroid ${c_i}, i\!\in\!\{1,\!\ldots\!, C\}$ based on feature similarity $s$:
\begin{align}
y_i(x) = 
\begin{cases}
    1,& \text{if } i = \arg\min_{k \in \{1, ..., C\}} s({f(x)}, {c_k})\\
    0,              & \text{otherwise}.
\end{cases}
\end{align}
\noindent We predict a soft cluster assignment $\hat{y}(x)$ by taking softmax over the similarity between instance $x$ and each learnable centroid:
\begin{align}
\hat{y}_i(x)= \frac{e^{s({f(x)}, {c_i})}}{\sum_{j=1}^C e^{s({f(x)}, {c_j})}}.
\end{align}
The hard assignment $y(x)$ can be regarded as pseudo-labels\cite{lee2013pseudo,sohn2020fixmatch,van2020scan}.
By minimizing $D_{\text{KL}}({y}(x) \| \hat {y}(x))$, the KL divergence between soft and hard assignments,  we encourage not only each instance to become more similar to its centroid, but also the learnable centroid to become a better representative of instances in the cluster.  With soft predictions, each instance has an effect on all the centroids.

Hardening soft assignments has a downside:  Initial mistakes are hard to correct with later training, degrading performance.  Our solution is to ignore ambiguous instances with maximal softmax scores below threshold $\tau$:
\begin{align}
L_{\text{global}}(\{{x_i}\}_{i=1}^n) = \frac{1}{n}\sum_{\max( \hat {y}(x_i)) \geq \tau} D_{\text{KL}}({y}(x_i) \| \hat {y}(x_i))
\end{align}
where $\tau$ is the threshold hyper-parameter.  This loss leads to curriculum learning:  As instances are more confidently assigned to a cluster with more training, more instances get involved in shaping both feature $f(x)$ and clusters $\{c_i\}$.

Our global loss can be readily related to $K$-Means clustering.  
\begin{observation}
For $\tau\!=\!0$ and fixed feature $f$, optimizing $L_{\text{global}}$ is equivalent to optimizing $K$-Means clustering with a regularization term on inter-cluster distances that encourage additional diversity.
See Appendix for derivations.
\end{observation}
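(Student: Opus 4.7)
My plan is to unfold $L_{\text{global}}$ into two distinct pieces, identify the first as the $K$-Means objective of Eqn.~\ref{eqn:kmeans}, and show that the second is a non-negative term that shrinks precisely when centroids are well separated, hence acting as an inter-cluster diversity regularizer.

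Let $k^{\ast}(i)$ denote the unique index where $y(x_i)$ is one-hot. Because $y(x_i)$ is one-hot, the KL divergence collapses to $D_{\mathrm{KL}}(y(x_i)\,\|\,\hat{y}(x_i)) = -\log \hat{y}_{k^{\ast}(i)}(x_i)$. With $\tau = 0$ the threshold constraint is vacuous, so substituting the softmax yields
\begin{equation*}
L_{\text{global}} \;=\; \frac{1}{n}\sum_{i=1}^{n}\Big[-s(f(x_i),c_{k^{\ast}(i)}) \;+\; \log\sum_{j=1}^{C} e^{s(f(x_i),c_j)}\Big].
\end{equation*}
Using the $\ell_{2}$-normalized features of Sec.~\ref{sec:unsup_rep_learning} with $s(a,b)=a^{\top}b$, the identity $-s(a,b) = \tfrac{1}{2}\|a-b\|^{2} - 1$ shows that the first term, summed over $i$, equals $\tfrac{1}{2}\sum_{i}\|f(x_i)-c_{k^{\ast}(i)}\|^{2}$ up to an additive constant. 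Since the hard assignment picks the centroid at maximal $s$ (equivalently, minimal squared distance), this is precisely the $K$-Means within-cluster sum-of-squares objective of Eqn.~\ref{eqn:kmeans}.

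For the second term, I would expose the slack via the max-trick,
\begin{equation*}
\log\sum_{j} e^{s_{ij}} \;=\; s_{i,k^{\ast}(i)} \;+\; \log\Big(1+\!\!\sum_{j\neq k^{\ast}(i)}\!\! e^{s_{ij}-s_{i,k^{\ast}(i)}}\Big),
\end{equation*}
whose correction $\log(1+\cdot)$ is non-negative and shrinks monotonically as the non-assigned centroids $c_j$ become less similar to $f(x_i)$. Aggregated over all $i$, this penalizes any centroid that stays close to data points outside its own cluster. By the triangle inequality, combined with each $c_{k^{\ast}(i)}$ being close on average to the $f(x_i)$ it represents, this translates into a penalty on small pairwise centroid separations $\|c_i-c_j\|$ — the promised inter-cluster diversity regularizer.

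The main obstacle is making that last step quantitative: the log-sum-exp correction is a \emph{soft} quantity, whereas "inter-cluster distance" is geometric, so the equivalence only holds up to the standard $\log C$ slack between log-sum-exp and max, and up to a within-cluster radius controlled by the $K$-Means term itself. I plan to make this precise via $\|c_i-c_j\| \geq \|c_j - f(x)\| - \|c_i - f(x)\|$ for any $x\in S_i$ and then averaging over the cluster: the first quantity on the right is what the regularizer directly controls, and the second is bounded by the $K$-Means within-cluster radius, so small regularizer plus small $K$-Means loss forces large centroid separations.
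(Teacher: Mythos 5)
Your decomposition is essentially the paper's: with $\tau=0$ the confidence filter is vacuous, the one-hot target collapses the KL divergence to the cross-entropy $-\log\hat{y}_{k^{\ast}(i)}(x_i) = -s(f(x_i),c_{k^{\ast}(i)}) + \log\sum_{j} e^{s(f(x_i),c_j)}$, the first piece is identified with the within-cluster sum of squares, and the second piece is read as a diversity regularizer. One minor issue: identifying $-s(a,b)$ with $\tfrac{1}{2}\|a-b\|^{2}$ up to an additive constant requires the \emph{centroids} to be unit-norm as well, since otherwise the leftover $-\tfrac{1}{2}\|c_{k^{\ast}(i)}\|^{2}$ is not constant in the optimization variables; the paper sidesteps this by simply taking $s(\cdot,\cdot)=-d(\cdot,\cdot)$ with $d$ the squared $\ell_2$ distance.

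The substantive gap is that your max-trick step undermines the decomposition you are trying to establish. Writing $\log\sum_{j} e^{s_{ij}} = s_{i,k^{\ast}(i)} + \log\bigl(1+\sum_{j\neq k^{\ast}(i)} e^{s_{ij}-s_{i,k^{\ast}(i)}}\bigr)$ and substituting back makes the $\pm s_{i,k^{\ast}(i)}$ cancel, so the \emph{exact} identity is $L_{\text{global}} = \tfrac{1}{n}\sum_{i}\log\bigl(1+\sum_{j\neq k^{\ast}(i)}e^{s_{ij}-s_{i,k^{\ast}(i)}}\bigr)$ --- no separate $K$-Means term survives. Equivalently, the log-partition piece is \emph{not} ``a non-negative term that shrinks when centroids are well separated'': its dominant component is $+s_{i,k^{\ast}(i)}$, which pushes each sample \emph{away from its own centroid} and directly opposes the $K$-Means piece; only the residual $\log(1+\cdot)$ concerns the other centroids. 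You must account for this interaction, and the paper does so with a gradient computation: the derivative of the regularizer with respect to $d(f(x),c_{M(x)})$ is $-\hat{y}_{M(x)}(x)\in[-1,0]$ while the main term contributes $+1$, so the net coefficient $1-\hat{y}_{M(x)}(x)\geq 0$ still attracts each sample to its assigned centroid, and the force on non-assigned centroids is purely repulsive. Without that (or an equivalent) argument, your ``$K$-Means $+$ regularizer'' split is a formal rewriting rather than a demonstration that the objective behaves like $K$-Means plus a diversity penalty. Finally, your planned triangle-inequality conversion from sample-to-centroid repulsion to pairwise centroid separation is more than the paper attempts (it stops at ``maximizing each sample's distance to all non-assigned centroids pushes clusters apart'') and is not needed to match the observation as stated; the paper instead spends that effort verifying the unregularized piece is exactly $K$-Means by showing its minimizers are the cluster means.
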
 

\paragraph{Local Constraint with Neighbor Cluster Alignment.}
Our global constraint is the counterpart of $K$-Means clustering in USL. However, since soft assignments usually have low confidence scores for most instances at the beginning, convergence could be very slow and sometimes unattainable.
We propose an additional local smoothness constraint by assigning an instance to the same cluster of its neighbors' in the unsupervisedly learned feature space to prepare confident predictions for the global constraint to take effect. 

This simple idea as is could lead to two types of collapses:  Predicting one big cluster for all the instances and predicting a soft assignment that is close to a uniform distribution for each instance. We tackle them separately.

\noindent
\textbf{1) For one-cluster collapse}, we adopt a trick for long-tailed recognition \cite{menon2020long} and
adjust logits to prevent their values from concentrating on one cluster: \begin{align}
\hat{P}({z}, \bar{z}) &= {z} -\alpha \cdot \log \bar z \\
\bar z & = \mu \cdot \sigma({z})+(1\!-\!\mu)\cdot \bar z
\end{align}
where $\alpha$ controls the intensity of adjustment, $\bar{z}$ is an
exponential moving average of $\sigma({z})$, and $\sigma(\cdot)$ is the softmax function.

\noindent
\textbf{2) For even-distribution collapse}, we use a sharpening function \cite{berthelot2019mixmatch, berthelot2019remixmatch,assran2021semi} to encourage the cluster assignment to approach a one-hot probability distribution, where a temperature parameter $t$ determines the spikiness.

Both anti-collapse measures can be concisely captured in a single function $P(\cdot)$ that modifies and turns logits $z$ into a reference distribution:
\begin{align}
[P(z, \bar z, t)]_i &=\frac{\exp(\hat{P}(z_i, \bar{z}_i)/t)}{\sum_j\exp(\hat{P}(z_j, \bar{z}_j/t))}
\end{align}

We now impose our local labeling smoothness constraints with such modified soft assignments between $x_i$ and its randomly selected neighbor $x'_i$:
\begin{align}
L_{\text{local}}(\{{x_i}\}_{i=1}^n) = \frac{1}{n} \sum_{i=1}^n D_{\text{KL}}(P(y(x'_i), \bar y(x'_i), t) || \hat y(x_i)).
\end{align}
We restrict $x'_i$ to $x$'s $k$ nearest neighbors,  selected according to the unsupervisedly learned feature prior to training and fixed for simplicity and efficiency.  

We show that our local constraint prevents both collapses.  
\begin{observation}
Neither \textit{one-cluster} nor \textit{even-distribution} collapse is optimal to our local constraint, i.e., $P(y(x'), \bar y(x'), t) \neq \hat y({x})$.  See Appendix for more details.
\end{observation}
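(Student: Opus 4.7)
The plan is to treat each collapse regime as an independent case and, in each, exhibit a mismatch between the target distribution $P(y(x'), \bar y(x'), t)$ and the soft assignment $\hat y(x)$, so that the KL term in $L_{\text{local}}$ remains strictly positive. Since $L_{\text{local}}$ decomposes as an expectation of per-pair KL divergences, any strict mismatch at the collapse state rules it out as a minimizer. I would first write the decomposition $L_{\text{local}}$ at the hypothetical collapse, then substitute the resulting simplified forms of $y(x')$, $\bar y(x')$, and $\hat y(x)$, and compute the adjusted logits $\hat P(z,\bar z) = z - \alpha \log \bar z$ and their softmax with temperature $t$ explicitly.

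For the one-cluster collapse, I would assume $\hat y(x) \to e_k$ and hence $y(x') = e_k$ for all $x, x'$, which forces the EMA $\bar y$ to concentrate on the softmax image of $e_k$, still strictly peaked at coordinate $k$. Then the logit adjustment $-\alpha \log \bar y_i$ is largest for $i \neq k$ (because $\bar y_i$ is smallest there), so the adjusted logits put extra mass on the non-$k$ coordinates. After sharpening with temperature $t$, the resulting $P$ is not concentrated at $k$; in fact in the exact one-hot limit the KL divergence $D_{\text{KL}}(P \,\|\, e_k)$ diverges. This directly contradicts $P = \hat y(x) = e_k$.

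For the even-distribution collapse, I would assume $\hat y(x) \to (1/C, \ldots, 1/C)$ uniformly in $x$. The hard assignment $y(x')$ is still a one-hot vector $e_j$ (picked by whichever centroid is marginally closest), and $\bar y$ is approximately uniform so the logit adjustment $-\alpha \log \bar y$ is approximately constant across coordinates and shifts every logit by the same amount. Then $P(y(x'), \bar y(x'), t)$ reduces to the temperature-$t$ softmax of a one-hot vector, which is strictly non-uniform for every finite $t > 0$. Comparing this peaked distribution to the uniform $\hat y(x)$ gives a KL of order $\log C > 0$, again contradicting optimality.

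The main obstacle is handling the fact that the two collapses are idealized limits rather than exact fixed points of the soft predictor: $\hat y(x)$ can only approach (not equal) a one-hot or uniform distribution, so I will have to argue by continuity that in a small neighborhood of the collapse the mismatch between $P(y(x'), \bar y(x'), t)$ and $\hat y(x)$ is bounded below by a strictly positive constant depending on $\alpha$, $t$, and $C$. This yields a strict lower bound on $L_{\text{local}}$ near the collapse and a repulsive gradient, which is the precise sense in which neither collapse can be optimal. Verifying that the EMA update of $\bar z$ has reached its stationary value in each regime is a routine but necessary check so that the adjustment term used in the computation is consistent with the assumed collapse.
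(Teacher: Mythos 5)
Your proposal follows the same route as the paper's appendix proof: split into the two collapse regimes, compute the adjusted-and-sharpened reference distribution at the hypothetical collapse, and exhibit a mismatch with $\hat y(x)$ so the per-pair KL cannot vanish. In the one-cluster case the paper makes your qualitative observation exact: at the collapse $\bar z \approx \sigma(z)$, so $z - \alpha\log\bar z \approx z - \alpha\log\sigma(z)$, which for $\alpha=1$ is exactly a constant vector (the log-sum-exp cancellation), hence $P$ is approximately \emph{uniform} while $\hat y(x)$ is near one-hot; your monotonicity argument (the adjustment boosts the non-$k$ coordinates) reaches the same conclusion and your remark that $D_{\mathrm{KL}}(P\,\|\,e_k)$ diverges in the exact limit is a valid, slightly stronger way to rule out that fixed point.

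Two caveats on the even-distribution case. First, the appendix evaluates $P$ on the \emph{logits} $z(x')$, not on the hard one-hot label $y(x')$ (the notation $P(y(x'),\bar y(x'),t)$ in the statement is misleading; the loss as actually defined and analyzed is $D_{\mathrm{KL}}(P(z(x'_i),\bar z,t)\,\|\,\hat y(x_i))$). Under that reading, $P$ is not the temperature softmax of a one-hot vector but the sharpened version of the near-uniform logits $z(x')$, and the mismatch with the near-uniform $\hat y(x)$ comes only from amplifying the small residual variations in $z(x')$. Consequently your claimed lower bound ``of order $\log C$'' does not hold, and, more importantly, your plan to establish a strictly positive constant lower bound on the KL over a whole neighborhood of the collapse fails: as the logit variations shrink to zero the KL also tends to zero, so the infimum over any punctured neighborhood of exact uniformity is $0$. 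The paper sidesteps this by arguing only that exact uniformity is never attained during optimization and that at any non-degenerate point the sharpening strictly increases spikiness, so $P \neq \hat y(x)$ and the gradient points away from the collapse. If you keep your one-hot reading the quantitative bound is fine, but you should then reconcile it with the loss the paper actually optimizes; if you adopt the logits reading, replace the uniform lower bound with the weaker (and sufficient) claim that the KL is strictly positive at every point other than the unattained exact-uniform configuration.
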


Our final loss adds up the global and local terms with loss weight $\lambda$:
\begin{align}
L = L_{\text{global}} + \lambda L_{\text{local}}
\end{align}

\paragraph{Diverse and Representative Instance Selection in USL-T.}  Our USL-T is an end-to-end unsupervised feature learning method that directly outputs $m$ clusters for selecting $m$ {\it diverse} instances.   For each cluster, we then select the most {\it representative} instance, characterized by its highest confidence score, i.e. $\max \hat y({x})$.  Just as USL, USL-T improves model learning efficiency by  selecting diverse representative instances for labeling, without any label supervision.

\subsection{Distinctions and Connections With SSL/AL/SSAL}
\tabComparison{t!}
Table \ref{tab:comparison} compares our USL with related SSL, AL, and SSAL settings.
\ol{enumerate}{2}{
\item Our USL has the advantage of AL/SSAL that seeks optimal instances to label, yet does not require inefficient initial random samples or multiple rounds of human interventions.  USL has high label efficiency for selected instances in low label settings and does not need to trade off annotation budget allocation between initial random sampling and several interim annotation stages. 
\item 
Compared to AL, our USL also leverages unlabeled data. Compared to SSAL, USL is much easier to implement because we keep existing SSL implementation intact, while SSAL requires a human-in-the-loop pipeline.
Consequently,
unlike AL/SSAL where instance selection is coupled with the model to be trained, our selection is \textit{decoupled} from the downstream SSL model.  The same selection from USL works well even across different downstream SSL methods, enabling label reuse across different SSL experiments. 
\item
Most notably, our work is the first \textit{unsupervised} selective labeling method on large-scale recognition datasets that requests annotation only \textit{once}.
}

\section{Related Work}
\label{related_work}


\paragraph{Semi-supervised Learning}(SSL) integrates information from small-scale labeled data and large-scale unlabeled data.
\textit{Consistency-based regularization} \cite{sajjadi2016regularization, tarvainen2017mean, xie2020unsupervised} applies a consistency loss by imposing invariance on unlabeled data under augmentations. 
\textit{Pseudo-labeling} \cite{lee2013pseudo, berthelot2019mixmatch, berthelot2019remixmatch, wang2022debiased} relies on the model’s high confidence predictions to produce pseudo-labels of unlabeled data and trains them jointly with labeled data. 
FixMatch \cite{sohn2020fixmatch} integrates strong data augmentation \cite{cubuk2020randaugment} and pseudo-label filtering\cite{liu2019deep} and explores training on the most representative samples ranked by \cite{carlini2019distribution}. However, \cite{carlini2019distribution} is a supervised method that requires all labels. 
\textit{Transfer learning} method SimCLRv2 \cite{chen2020big} is a two-stage SSL method that applies contrastive learning followed by fine-tuning on labeled data. 
\textit{Entropy-minimization} \cite{grandvalet2005semi, berthelot2019mixmatch} assumes that classification boundaries do not pass through the high-density area of marginal distributions and enforces confident predictions on unlabeled data. 
Instead of competing with existing SSL methods, our USL enables more effective SSL by choosing the right instances to label {\it for} SSL, without any prior semantic supervision.

\paragraph{Active Learning}(AL) aims to select a small subset of labeled data  to achieve competitive performance over supervised learning on fully labeled data \cite{cohn1994improving, roy2001toward, bilgic2009link}. \textit{Traditional AL} has three major types \cite{settles2009active, ren2020survey}: 
membership query synthesis \cite{angluin1988queries}, stream-based selective sampling \cite{dagan1995committee, atlas1990training}, and pool-based active learning \cite{tong2001support, huang2010active, wei2015submodularity, miao2021iterative}.
In \textit{Deep AL},
Core-Set \cite{sener2017active} approaches data selection as a set cover problem. 
\cite{ducoffe2018adversarial} estimates distances from decision boundaries based on sensitivity to adversarial attacks.
LLAL \cite{yoo2019learning} predicts target loss of unlabeled data parametrically
and queries instances with the largest loss for labels.
\textit{Semi-supervised Active Learning} (SSAL) combines AL with SSL. \cite{song2019combining} merges uncertainty-based metrics with MixMatch\cite{berthelot2019mixmatch}. \cite{gao2020consistency} merges consistency-based metrics with consistency-based SSL. AL/SSAL often rely on initial labeled data to learn both the model and the instance sampler, requiring multiple (e.g. 10) rounds of sequential annotation and significant modifications of existing annotation pipelines. 
Recent \textit{few-label transfer}\cite{li2021improve} leverages features from a large source dataset to select instances in a smaller target dataset for annotation. It also requires a seed instance per class to be pre-labeled in the target dataset, whereas we do not need supervision anywhere for our instance selection.

\paragraph{Deep Clustering.} DeepCluster \cite{caron2018deep} also jointly learns features and cluster assignments with $k$-Means clustering. However, USL-T, with end-to-end backprop to jointly optimize classifiers and cluster assignments, is much more \textit{scalable} and \textit{easy to implement}. UIC/DINO \cite{chen2020unsupervised,caron2021emerging} incorporate neural networks with categorical outputs through softmax, but both methods focus on learning feature or attention maps for downstream applications instead of acquiring a set of instances that are representative and diverse. Recently, SCAN/NNM/RUC \cite{van2020scan,dang2021nearest,park2021improving} produce image clusters to be evaluated against semantic classes via Hungarian matching. However, such methods are often compared {\it against} SSL methods \cite{van2020scan}, whereas our work is  {\it for} SSL methods.  See appendix for more discussions about \textbf{self-supervised learning} and \textbf{deep clustering} methods.

\section{Experiments}
\label{experiment}

\def\tabCifarALSSAL#1{
    \centering
    \tablestyle{2pt}{0.9}
    \normalsize
    \tablefontsmall{
        \begin{tabular}{l|r|r}
        \shline
        \textbf{CIFAR-10} & Budget & Acc (\%) \\
        \shline
        \rowcolor{LightGray} \multicolumn{3}{l}{\textit{Active Learning (AL)}} \\
        CoreSet\cite{sener2017active}$^\dagger$ & 7500 & 85.4 \\ 
        VAAL\cite{sinha2019variational}$^\dagger$ & 7500 & 86.8 \\ 
        UncertainGCN\cite{caramalau2021sequential}$^\dagger$ & 7500 & 86.8 \\ 
        CoreGCN\cite{caramalau2021sequential}$^\dagger$ & 7500 & 86.5 \\ 
        MCDAL\cite{cho2021mcdal} & 7500 & 87.2 \\ 
        \shline
        \rowcolor{LightGray} \multicolumn{3}{l}{\textit{Semi-supervised Active Learning (SSAL)}} \\
        TOD-Semi\cite{huang2021semi} & 7500 & 87.8 \\ 
        CoreSetSSL\cite{sener2017active}$^\ddagger$ & 250 & 88.8 \\ 
        CBSSAL \cite{gao2020consistency} & 150 & 87.6 \\
        MMA\cite{song2019combining} & 500 & 91.7 \\ 
        MMA+k-means\cite{song2019combining} & 500 & 91.5 \\ 
        REVIVAL\cite{guo2021semi} & 150 & 88.0 \\ 
        \shline
        \rowcolor{LightGray} \multicolumn{3}{l}{\textit{Selective Labeling}} \\
        \rowcolor{LightGreen}
        FixMatch + USL (Ours) & \bf 40 & 90.4 \\
        \rowcolor{LightGreen}
        FixMatch + USL (Ours) & \bf 100 & \bf 93.2 \\
        \rowcolor{LightGreen}
        FixMatch + USL-T (Ours) & \bf 40 & \bf 93.5 \\
        \shline
        \end{tabular}
    }
    \captionof{table}{USL and USL-T greatly outperform AL/SSAL methods in accuracy and label efficiency on CIFAR-10. $\dagger$, $\ddagger$: results from \cite{huang2021semi} and \cite{gao2020consistency}, respectively.}
    \label{tab:cifar-10-al-ssal}
}

\def\figALSSAL#1{
\setlength{\tabcolsep}{0.5mm}
\begin{tabular}{c}
    \begin{subfigure}{1.0\textwidth}
        \centering
        \includegraphics[width=0.85\linewidth]{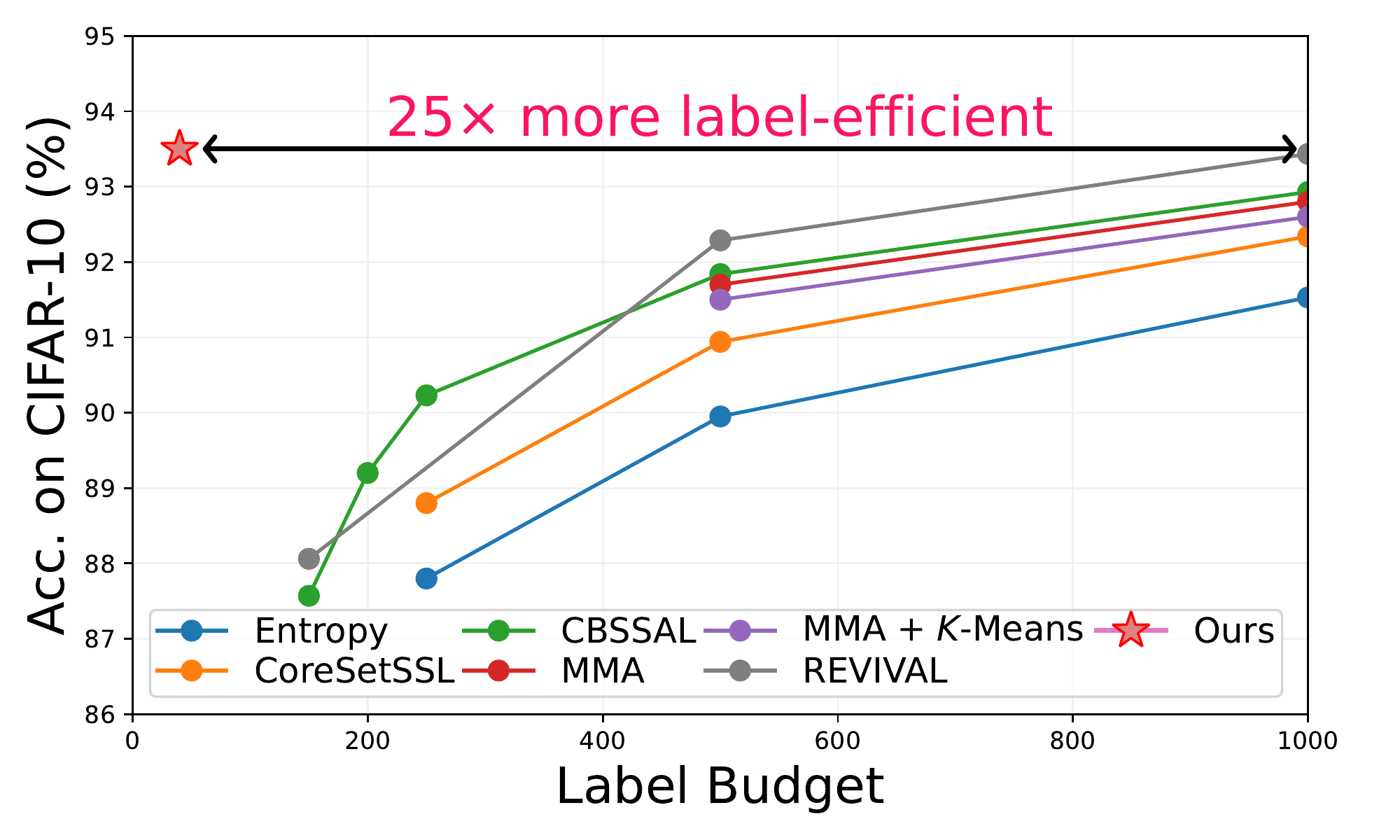}
        \caption{CIFAR-10}
        \label{fig:cifar10_ssal_comp}
    \end{subfigure} 
    \\
    \begin{subfigure}{1.0\textwidth}
        \centering
        \includegraphics[width=0.85\linewidth]{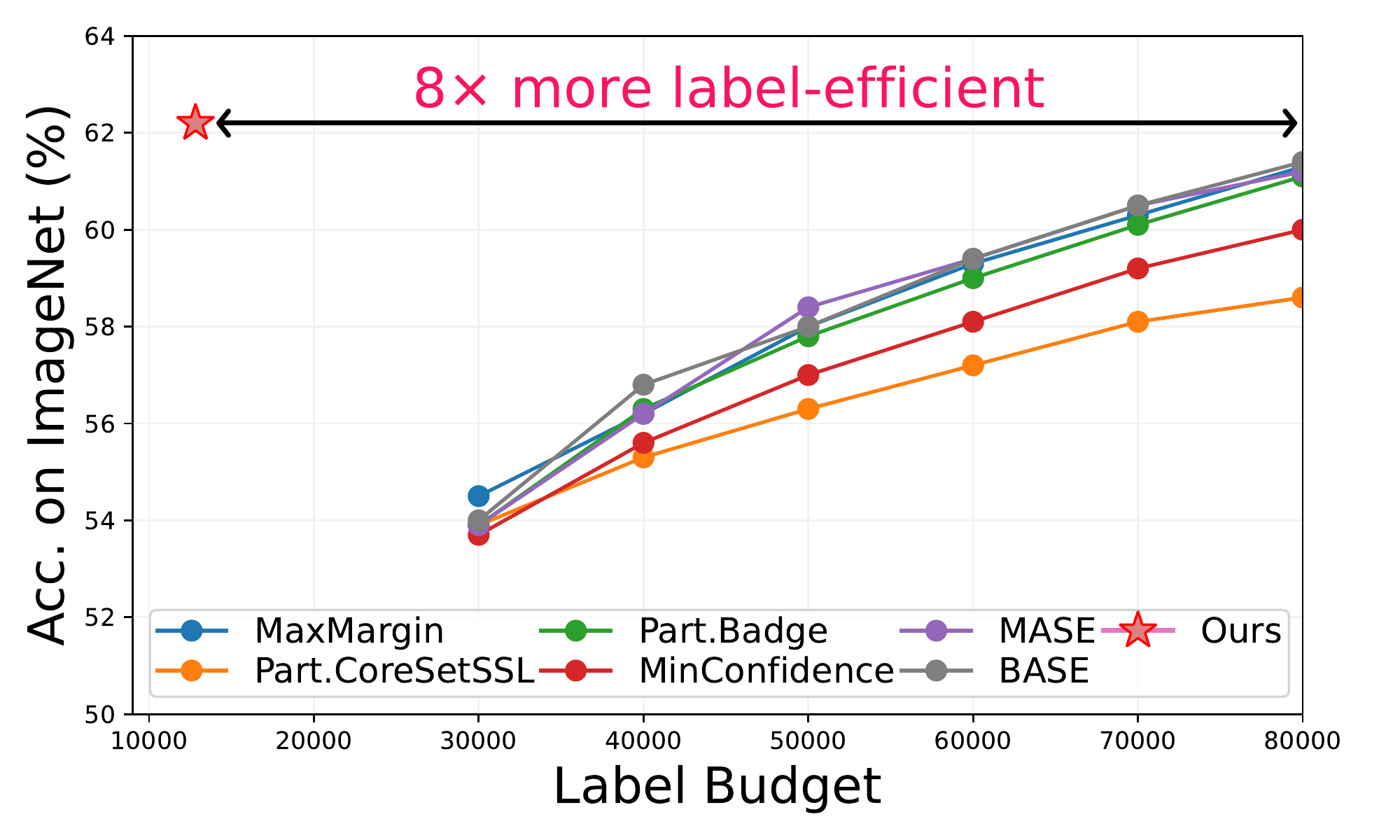}
        \caption{ImageNet}
        \label{fig:imagenet_ssal_comp}
    \end{subfigure}
\end{tabular}
\caption{Compared to SSAL, USL gets up to $25\x$ higher label efficiency. 
}
\label{fig:al_ssal}
}

\def\tabfigALSSAL#1{
\begin{figure}[#1]
\begin{minipage}[t!]{0.5\textwidth}
    \centering
    \tabCifarALSSAL{c}
\end{minipage}
\hfill
\begin{minipage}[t!]{0.46\textwidth}
    \centering
    \figALSSAL{c}
\end{minipage}
\end{figure}
}

\def\tabCifarSSL#1{
    \centering
    \tablestyle{4pt}{0.9}
    \normalsize
    \tablefontsmall{
    \begin{tabular}{l|l|l}
    \shline
    \textbf{CIFAR-10} & S.v2-CLD & FixMatch \\
    \shline
    Random Selection & 60.8 & 82.9 \\
    \textcolor{gray}{Stratified Selection$^\dagger$} & \textcolor{gray}{66.5} & \textcolor{gray}{88.6} \\
    \hline
    UncertainGCN & 63.0 & 77.3 \\
    CoreGCN & 62.9 & 72.9 \\
    MMA$^{+\ddagger}$ & 60.2 & 71.3 \\
    TOD-Semi & 65.1 & 83.3 \\
    \hline
    \rowcolor{LightGreen}
    USL (Ours) &\bf  76.6 \textbf{\color{Green}{$\uparrow$11.5}} &\bf  90.4 \textbf{\color{Green}{$\uparrow$7.1}} \\
    \rowcolor{LightGreen}
    USL-T (Ours) &\bf  76.1 \textbf{\color{Green}{$\uparrow$11.0}} &\bf  93.5 \textbf{\color{Green}{$\uparrow$10.2}} \\
    \shline
    \end{tabular}
    }
    \captionof{table}{The samples selected by USL and USL-T greatly outperform the ones from AL/SSAL on \cite{sohn2020fixmatch, chen2020big, wang2021unsupervised}, with a budget of 40 labels on CIFAR-10. 
    $^\ddagger$: MMA$^+$ is our improved MMA\cite{song2019combining} based on FixMatch.
    $\dagger$: not a fair baseline.
    }
    \label{tab:cifar-10-ssl}
}

\def\figCifarALSSALBalance#1{
    \setlength{\tabcolsep}{0.5mm}
    \begin{subfigure}{1.0\textwidth}
        \centering
        \includegraphics[width=1.0\linewidth, height=0.155\textheight]{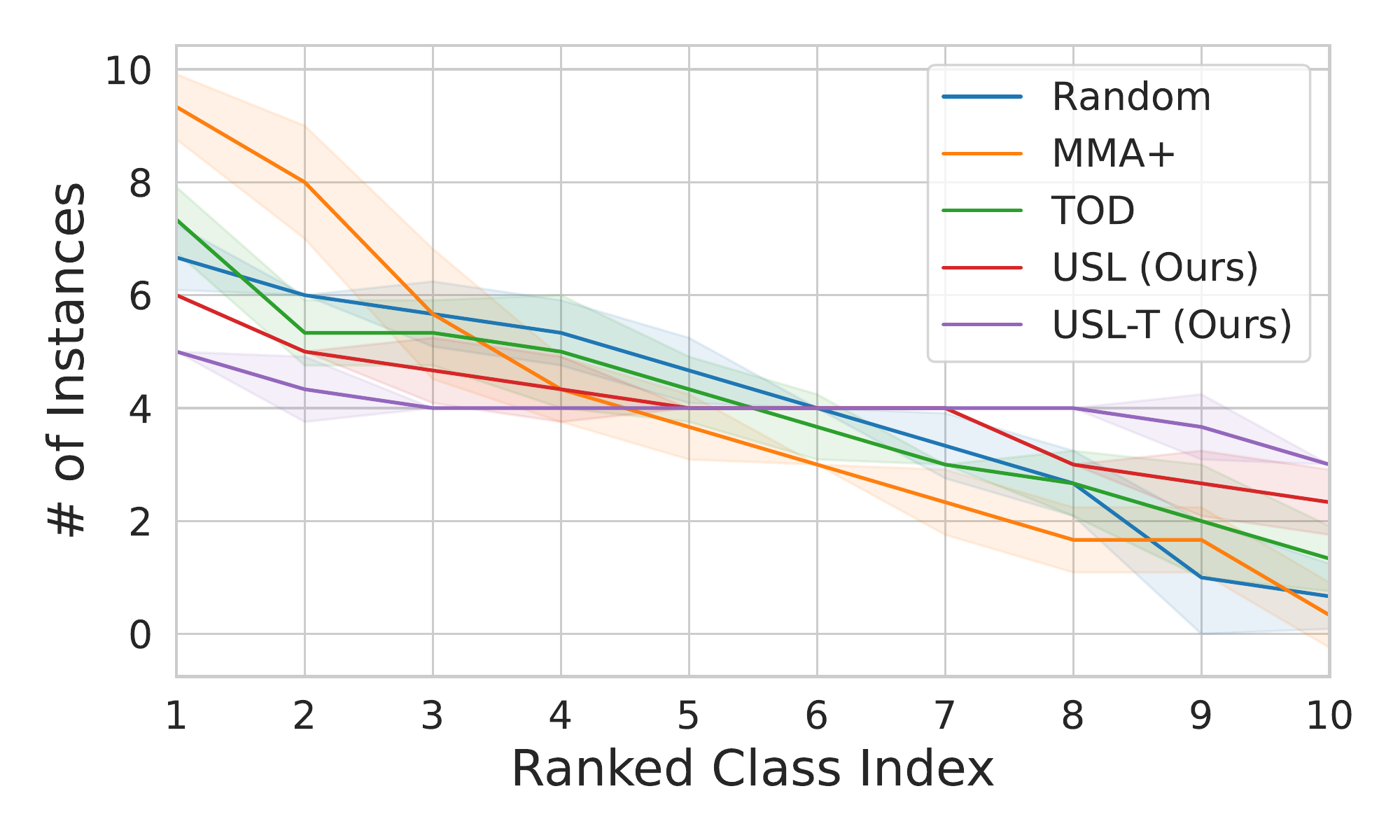}
    \end{subfigure}
    \caption{Comparisons on the semantic class distributions of several methods over 3 runs. 
    USL and USL-T get more balanced distribution.
    \label{fig:balance}
    }
}

\def\figCifarALSSALBalanceOriginal#1{
    \setlength{\tabcolsep}{0.5mm}
    \begin{tabular}{c}
        \begin{subfigure}{1.0\textwidth}
            \centering
            \includegraphics[width=0.86\linewidth]{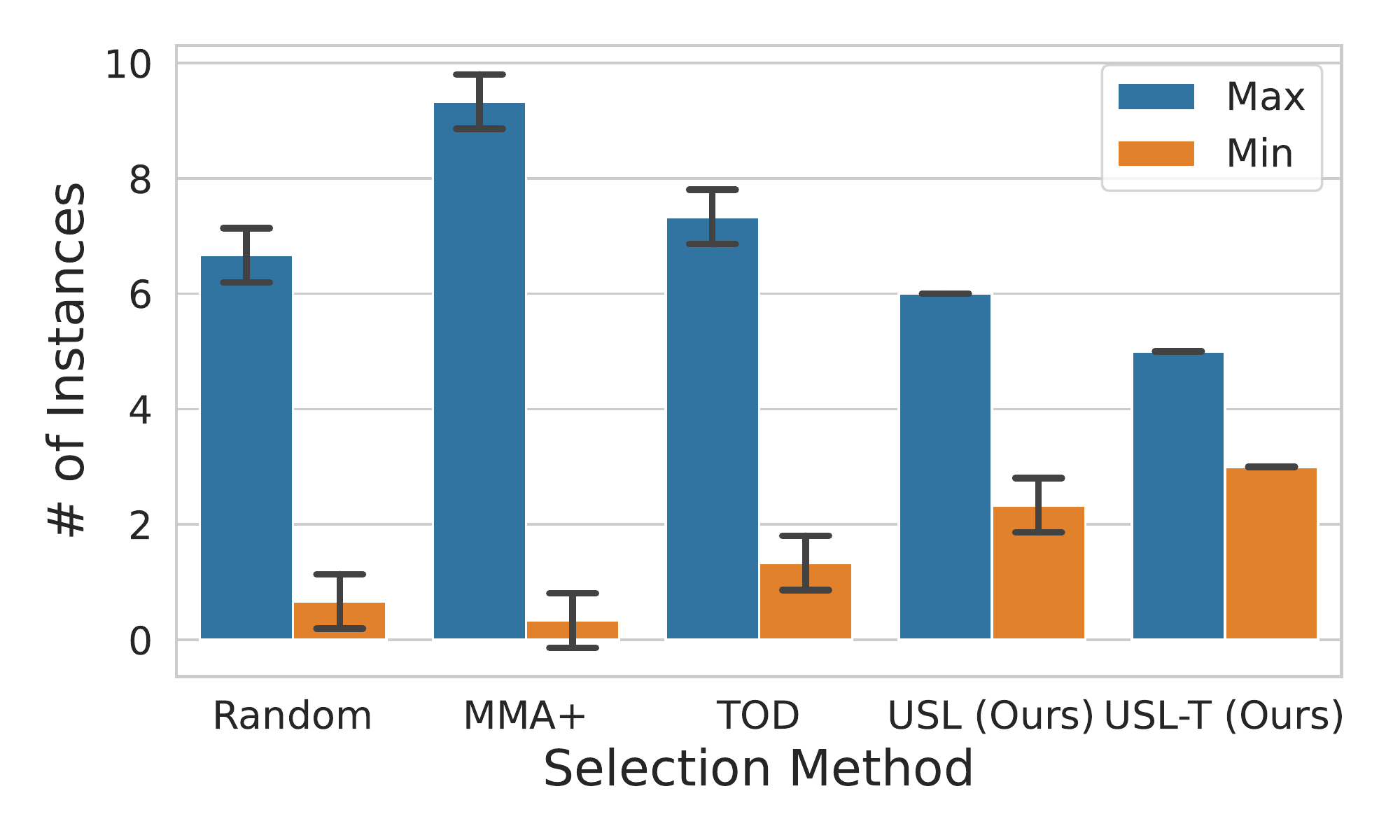}
            \captionof{figure}{Min/Max of Distribution}
            \label{fig:cifar10-balance-max_min}
        \end{subfigure} \\
        \begin{subfigure}{1.0\textwidth}
            \centering
            \includegraphics[width=0.86\linewidth]{images/plot_cifar10_balance_dist.pdf}
            \captionof{figure}{Full Sorted Distribution}
            \label{fig:cifar10-balance-dist}
        \end{subfigure}
    \end{tabular}
    \caption{Comparisons on the semantic class distributions of several methods over 3 runs. 
    USL and USL-T selections have more balanced semantic distribution, whereas samples from MMA$^+$ and TOD are often even more imbalanced than randomly selected ones.
    \label{fig:cifar10-balance}
    }
}

\def\tabfigCifarSSLBalance#1{
\begin{figure}[#1]
\begin{minipage}[t]{0.59\textwidth}
    \centering
    \tabCifarSSL{t}
\end{minipage}
\hfill
\begin{minipage}[t]{0.38\textwidth}
    \centering
    \figCifarALSSALBalance{t}
\end{minipage}
\end{figure}
}

\def\tabCifarAllMethods#1{
    \begin{table}[#1]
    \centering
    \tablestyle{4pt}{0.9}
    \normalsize
    \tablefontsmall{
        \begin{tabular}{l|l|l|l|l|l}
        \shline
        \textbf{CIFAR-10} & \multicolumn{1}{c|}{MixMatch} & \multicolumn{1}{c|}{SimCLRv2} & \multicolumn{1}{c|}{SimCLRv2-CLD} & \multicolumn{1}{c|}{FixMatch} & \multicolumn{1}{c}{CoMatch} \\
        \shline
        Random & 43.4 & 55.9 & 60.8 & 82.9 & 87.4 \\
        \textcolor{gray}{Stratified$\dagger$ } & 
            \textcolor{gray}{62.0} & 
            \textcolor{gray}{69.8} & 
            \textcolor{gray}{66.5} & 
            \textcolor{gray}{88.6} & 
            \textcolor{gray}{93.1}
            \\
        \hline
        \rowcolor{LightGreen}
        USL (Ours) & 
        \textbf{61.6} \textbf{\color{Green}{$\uparrow$18.2}} & 
        \textbf{69.1} \textbf{\color{Green}{$\uparrow$13.2}} & 
        \textbf{76.6} \textbf{\color{Green}{$\uparrow$15.8}} &
        \textbf{90.4} \textbf{\color{Green}{$\uparrow$7.5}} &
        \textbf{93.4} \textbf{\color{Green}{$\uparrow$6.0}}
        \\
        \rowcolor{LightGreen}
        USL-T (Ours) & 
        \textbf{66.0} \textbf{\color{Green}{$\uparrow$22.6}} & 
        \textbf{71.5} \textbf{\color{Green}{$\uparrow$15.6}} &
        \textbf{76.1} \textbf{\color{Green}{$\uparrow$15.3}} &
        \textbf{93.5} \textbf{\color{Green}{$\uparrow$10.6}} &
        \textbf{93.0} \textbf{\color{Green}{$\uparrow$5.6}} \\
        \shline
        \end{tabular}
    }
    \caption{USL/USL-T is a universal method that brings significant accuracy gains to various SSL methods. 
    Experiments are conducted on CIFAR-10 with 40 labels. $\dagger$: practically infeasible, as it assumes perfectly balanced labeled instances.
    }
    \label{tab:cifar-10-additional}
    \end{table}
}

\def\tabCifarOneHundred#1{
    \centering
    \tablestyle{3pt}{0.9}
    \normalsize
    \tablefontsmall{
        \begin{tabular}{l|l|l}
        \shline
        \textbf{CIFAR-100} & S.v2-CLD Acc & FixMatch Acc \\
        \shline
        Random Selection & 26.5 & 48.7 \\
        \textcolor{gray}{Stratified Selection$^\dagger$} & \textcolor{gray}{30.6} & \textcolor{gray}{51.2} \\
        \hline
        \rowcolor{LightGreen}
        USL (Ours) &\bf  33.0 \textbf{\color{Green}{$\uparrow$6.5}} &\bf 55.1 \textbf{\color{Green}{$\uparrow$6.4}} \\
        \rowcolor{LightGreen}
        USL-T (Ours) &\bf  36.9 \textbf{\color{Green}{$\uparrow$10.4}} &\bf 55.7 \textbf{\color{Green}{$\uparrow$7.0}} \\
        \shline
        \end{tabular}
    }
    \caption{By selecting informative samples to label, USL and USL-T greatly improve performance of SSL methods on CIFAR-100 with 400 labels.
    $\dagger$: practically infeasible, as it assumes perfectly balanced labeled instances.
    }
    \label{tab:cifar-100-ssl}
}

\def\tabImageNetOneHundred#1{
    \centering
    \tablestyle{5pt}{0.9}
    \normalsize
    \tablefontsmall{
    \begin{tabular}{l|l}
    \shline
    \textbf{ImageNet-100} & SimCLRv2 Acc \\
    \shline
    Random & 62.2 \\
    \textcolor{gray}{Stratified$^\dagger$} & \textcolor{gray}{65.1} \\
    \hline
    \rowcolor{LightGreen}
    USL (Ours) &\bf  67.5 \textbf{\color{Green}{$\uparrow$5.3}} \\
    \rowcolor{LightGreen}
    USL-T (Ours) &\bf  68.3 \textbf{\color{Green}{$\uparrow$6.1}} \\
    \shline
    \end{tabular}
    }
    \caption{USL and USL-T scale well to high dimensional image inputs with many classes on ImageNet-100 \cite{van2020scan}. $\dagger$: practically infeasible.
    }
    \label{tab:imagenet-100-ssl}
}

\def\tabCifarOneHundredImageNetOneHundred#1{
\begin{table}[#1]
\begin{minipage}[t]{0.56\textwidth}
    \centering
    \tabCifarOneHundred{t}
\end{minipage}
\hfill
\begin{minipage}[t]{0.4\textwidth}
    \centering
    \tabImageNetOneHundred{t}
\end{minipage}
\end{table}
}

\def\tabImageNet#1{
    \begin{table*}[#1]
    \centering
    \tablestyle{11pt}{0.9}
    \normalsize
    \tablefontsmall{
    \begin{tabular}{l|l|l|l|l}
    \shline
    & \multicolumn{2}{c|}{SimCLRv2} & \multicolumn{2}{c}{FixMatch} \\
    \textbf{ImageNet-1k} &\multicolumn{1}{c}{1\%} & 0.20\%  &\multicolumn{1}{c}{1\%} & 0.20\% \\
    \shline
    Random                 & 49.7  & 33.2   & 58.8  & 34.3  \\
    \textcolor{gray}{Stratified$^\dagger$} & \textcolor{gray}{52.0}  & \textcolor{gray}{36.4} & \textcolor{gray}{60.9$^*$} & \textcolor{gray}{41.1} \\ 
    \hline
    \rowcolor{LightGreen}
    USL-MoCo (Ours)   & 51.5 \color{Green}{$\uparrow$1.8} & 39.8 \color{Green}{$\uparrow$6.6} 
        & 61.6  \color{Green}{$\uparrow$2.8} & \textbf{48.6 \color{Green}{$\uparrow$14.3}}  \\
    \rowcolor{LightGreen}
    USL-CLIP (Ours)   & \bf 52.6 \color{Green}{$\uparrow$2.9} & \bf 40.4 \color{Green}{$\uparrow$7.2} 
        & \bf 62.2 \color{Green}{$\uparrow$3.4} & 47.5 \color{Green}{$\uparrow$13.2} \\ \hline
    \shline
    \end{tabular}
    }
    \caption{Our proposed methods scale well on large-scale dataset ImageNet \cite{ILSVRC15}.
    $^*$: reported in \cite{cai2021exponential}. 
    USL-MoCo and USL-CLIP use MoCov2 features and CLIP features, respectively, to perform selective labeling. $\dagger$: not a fair comparison. 
    }
    \label{tab:imagenet}
    \end{table*}
}

\def\tabMedMNIST#1{
    \centering
    \tablestyle{2pt}{1.0}
    \normalsize
    \tablefontsmall{
    \begin{tabular}{l|c}
    \shline
    Selection Method & Accuracy \\
    \shline
    Random           & 77.17 $\pm$ 6.98 \\
    \textcolor{gray}{Stratified$\dagger$} & 
                   \textcolor{gray}{80.46 $\pm$ 7.88} \\
    \hline
    \rowcolor{LightGreen}
    USL (Ours)    & \textbf{88.06} $\pm$ 1.41 {\color{Green}\bf$\uparrow$10.89} \\
    \shline
    \end{tabular}
    }
    \caption{USL shows remarkable generalizability across domains without \textit{any} pre-training or fine-tuning on the target domain in BloodMNIST \cite{yang2021medmnist}. Annotated samples are chosen by a self-supervised CLD model trained on CIFAR-10 and \textit{never exposed to medical images}. We adopt the same hyperparams as FixMatch on CIFAR-10, except that we train only for 64 epochs.
    $\dagger$:~outperforming stratified with less information.
    }
    \label{tab:medmnist}
}

\def\tabLoadingCIFAR#1{
    \centering
    \tablestyle{3pt}{0.95}
    \normalsize
    \tablefontsmall{
    \begin{tabular}{l|l|l}
    \shline
    Weights & Selection Method & Accuracy \\
    \shline
    SimCLR\cite{chen2020simple}                   & Random                & 55.9 \\ 
    SimCLR\cite{chen2020simple}                   & USL-T (Ours)           & 71.5 \\ 
    CLD\cite{wang2021unsupervised}                & USL-T (Ours)           & 77.2 \\ 
    \hline
    \rowcolor{LightGreen}
    USL-T (Ours)                                   & USL-T (Ours)           &\bf 85.4 {\color{Green}\bf$\uparrow$8.2} \\
    \shline
    \end{tabular}
    }
    \caption{The backbone weights learned as a by-product in USL-T capture more semantic information, thereby working as a good initialization. 
    }
    \label{tab:loading_cifar}
}



\def\tabHyperparamVertical#1{
    \centering
    \tablestyle{3pt}{1.0}
    \normalsize
    \tablefontsmall{
        \begin{tabular}{l|c|c}
        \shline
                   & CIFAR- & ImageNet- \\
        Hyperparam & 10/100 & 100/1k \\
        \shline
        Adjustment Factor $\alpha$ & \multicolumn{1}{c|}{5} & \multicolumn{1}{c}{2.5} \\
        Temperature $t$ & \multicolumn{1}{c|}{0.25} & \multicolumn{1}{c}{0.5} \\
        Loss Term Weight $\lambda$ & \multicolumn{1}{c|}{5} & \multicolumn{1}{c}{0.5} \\
        \hline
        Neighborhood Size $k$ & \multicolumn{2}{c}{20} \\
        Momentum $\mu$ & \multicolumn{2}{c}{0.5} \\
        \shline
        \end{tabular}
    }
    \caption{Hyperparams for USL-T. Hyperparams for USL are in appendix.
    }
    \label{tab:hyperparameters}
}

\def\tabLoadingCIFARHyperparam#1{
\begin{table}[#1]
\begin{minipage}[t!]{0.5\textwidth}
    \centering
    \tabLoadingCIFAR{t}
\end{minipage}
\hfill
\begin{minipage}[t!]{0.48\textwidth}
    \centering
    \tabHyperparamVertical{t}
\end{minipage}
\end{table}
}

We evaluate our USL and USL-T by integrating them into both pseudo-label based SSL methods (FixMatch \cite{sohn2020fixmatch}, MixMatch \cite{berthelot2019mixmatch}, or CoMatch\cite{li2021comatch}) and transfer-based SSL methods (SimCLRv2 and SimCLRv2-CLD\cite{chen2020big, wang2021unsupervised}).
We also compare against various AL/SSAL methods. Lastly, we show several intriguing properties of USL/USL-T such as generalizability.

\tabfigALSSAL{t!}

\subsection{CIFAR-10}

\tabfigCifarSSLBalance{t}

We compare against mainstream SSL methods such as FixMatch\cite{sohn2020fixmatch} and SimCLRv2-CLD\cite{chen2020big,wang2021unsupervised} on extremely low-label settings to demonstrate our superior label efficiency. The labeling budget is 40 samples in total unless otherwise stated.
Note that the self-supervised models used for instance selection are trained on CIFAR-10 from scratch entirely  \textit{without external data}. The SSL part, including backbone and hyperparameters, is untouched. 
See appendix for details.

\paragraph{Comparison with AL and SSAL.}  Table \ref{tab:cifar-10-al-ssal} compares ours against various recent AL/SSAL methods in terms of sample efficiency and accuracy. AL methods operate at a much larger labeling budget than ours ($187\x$ more), because they rely only on labeled samples to learn both features and classification. SSAL methods make use of unlabeled samples and have higher label efficiency.  However, we achieve much higher accuracy with fewer labels requested. 

To tease apart whether our performance gains come from SSL or selective labeling, we tune recent AL/SSAL methods with their public implementations and run experiments with the same total budget, i.e. 40 samples in a 20 random + 20 selected setting. We then apply AL/SSAL selections to the same SSL for a fair comparison (Table \ref{tab:cifar-10-ssl}).

While AL performs better than random selection in SimCLRv2-CLD, its advantage saturates on FixMatch.  Since AL relies on labeled samples to learn the right features, with 20 random samples, it is very difficult to learn meaningful features for selection. Instead, AL could only learn a very coarse selection criterion and hence limited gains.

SSAL methods have greater gains on SimCLRv2-CLD. However, since SSAL still depends on initial random selections which seldom cover all 10 classes, these methods do not have an accurate knowledge of the full dataset in the low-label setting, where many rounds of queries are infeasible.  That is, there is a serious trade-off in the low-label regime: Allowing more samples (e.g., 30) in the initial random selection for better coverage means less annotation budget for AL/SSAL selection (e.g., 10).  Such a dilemma manifests itself in the imbalanced selection in Fig.~\ref{fig:balance} and the poor performance on FixMatch.

\noindent{\textbf{USL/USL-T as a Universal Method.}} In addition to mainstream SSL, we also use SimCLRv2, MixMatch\cite{berthelot2019mixmatch}, and SOTA CoMatch\cite{li2021comatch} for a comprehensive evaluation in Table~\ref{tab:cifar-10-additional}. We observe significant accuracy gains on all of them.

\subsection{CIFAR-100}
On CIFAR-100, we keep hyperparameters the same as the ones for CIFAR-10, except that we change the budget level to $400$  to have 4 labels per class on average. Although we may benefit more from hyperparameter tuning, we already show \textit{consistent gains} over other selection methods (Table \ref{tab:cifar-100-ssl}). 

\tabCifarAllMethods{t!}



\subsection{ImageNet-100 and ImageNet-1k}

To demonstrate our effectiveness on large-scale datasets, we benchmark on 100 random classes of ImageNet \cite{van2020scan} and the full ImageNet \cite{ILSVRC15}.

\paragraph{ImageNet-100.} On SimCLRv2 with a budget of 400 labels in total, we outperform baselines by 6.1\% in this extremely low-label setting (Table \ref{tab:imagenet-100-ssl}).

\tabCifarOneHundredImageNetOneHundred{!t}
\tabImageNet{t!}

\paragraph{ImageNet-1k: Setup.} We experiment on SimCLRv2 and FixMatch with 1\% ($12, 820$ labels) and 0.2\% ($2, 911$ labels) labeled data. We also design a variant of our method that utilizes features provided by CLIP \cite{radford2021learning}.  CLIP is trained on uncurated internet-crawled data in a wide range of domains.  Following \cite{cai2021exponential}, we initialize FixMatch parameters with MoCov2. See appendix for more details.

\paragraph{ImageNet-1k: Comparing With AL/SSAL Methods.} As most AL/SSAL methods in Table \ref{tab:cifar-10-al-ssal} do not scale to ImageNet, we compare our USL with SSAL methods specifically designed for ImageNet-scale settings \cite{emam2021active}.  Fig.~\ref{fig:imagenet_ssal_comp}  shows our $8\times$ improvement in terms of label efficiency. 
Table \ref{tab:imagenet} shows that our approach provides up to 14.3\% (3.4\%) gains in the 0.2\% (1\%) SSL setting.

\paragraph{ImageNet-1k: USL-CLIP.} Table \ref{tab:imagenet} shows samples selected according to both MoCov2 and CLIP features boost SSL performance.  USL-MoCo performs 1.1\% better than USL-CLIP in the FixMatch setting.  We hypothesize that it is, in part, due to a mismatch between parameter initialization (MoCov2) and the feature space used for the sampling process (CLIP).
However, for 1\% case, USL-CLIP performs 0.6\% better than USL-MoCo, showing a slight advantage of a model trained with sufficient general knowledge and explicit semantics. 

\subsection{Strong Generalizability}
\label{generalizability}


\paragraph{Cross-dataset Generalizability with CLIP.} 
Since CLIP does not use ImageNet samples in training and the downstream SSL task is not exposed to the CLIP model either, USL-CLIP's result shows strong cross-dataset generalizability in Table \ref{tab:imagenet}. It means that: \textbf{1)} When a new dataset is collected, we could use a general multi-modal model to skip self-supervised pretraining; \textbf{2)} Unlike AL where sample selection is strictly coupled with model training, our annotated instances work \textit{universally} rather than with only the model used to select them.

\paragraph{Cross-domain Generalizability.} 
Such generalizability also holds \textit{across domains}. We use a CLD model trained on CIFAR-10 to select 40 labeled instances in medical imaging dataset BloodMNIST \cite{yang2021medmnist}. Although our model has \textit{not} been trained on any medical images, our model with FixMatch performs 10.9\% (7.6\%) better than random (stratified) sampling.  See appendix for more details.


\subsection{USL-T for Representation Learning}
\label{uslt_representation_learning}
Our USL-T updates feature backbone weights during selective labeling. The trained weights are not used as a model initializer in the downstream SSL experiments for fair comparisons.  However, we discover surprising generalizability that greatly exceeds  self-supervised learning models under the SimCLRv2 setting.
Specifically, we compare the performance of classifiers that are initialized with various model weights and are optimized on samples selected by different methods. Table~\ref{tab:loading_cifar} shows that, even with these strong baselines, initializing the model with our USL-T weights surpasses baselines by 8.2\%.

\tabLoadingCIFARHyperparam{t}





\subsection{Hyperparameters and Run Time}
\label{ablation}

Table \ref{tab:hyperparameters} shows that our hyperparameters generalize within small-scale and large-scale datasets. 
Our computational overhead is negligible. On ImageNet, we only introduce about 1 GPU hour for selective labeling, as opposed to 2300 GPU hours for the subsequent FixMatch pipeline.  See appendix for more analysis, including formulations and visualizations.

\section{Summary}

\label{summary}

Unlike existing SSL methods that focus on algorithms that better integrate labeled and unlabeled data, our selective-labeling is the first to focus on unsupervised data selection for labeling and enable more effective subsequent SSL.  By choosing a diverse representative set of instances for annotation, we show significant gains in annotation efficiency and downstream accuracy, with remarkable selection generalizability within and across domains. 

\paragraph{Acknowledgements.} 
The authors thank Alexei Efros and Trevor Darrell for helpful discussions and feedback on this work in their classes.

\section{Appendix}
\setcounter{theorem}{0}
\subsection{Relationships Between Global Loss in USL-T and the K-Means Clustering Objective}
Intuitively, the global loss in our proposed USL-T performs deep clustering. Furthermore, a connection can be observed between minimizing global loss and performing a generalized form of K-Means clustering, which reduces to K-Means clustering with an additional regularization term when $\tau = 0$ and the feature space is fixed.

\begin{observation}
Assume that $\tau = 0$ and the feature space is fixed, minimizing $L_{\text{global}}$ optimizes the objective of $K$-Means clustering with a regularization term on the inter-cluster distance that encourages additional diversity.
\end{observation}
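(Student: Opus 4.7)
The plan is to expand $L_{\text{global}}$ explicitly using the one-hot structure of the hard assignment $y(x)$ and decompose the resulting expression into a K-Means term plus a residual that plays the role of the regularizer. Since $\tau = 0$ the indicator filter disappears and every instance contributes, so I only need to work with $\frac{1}{n}\sum_i D_{\text{KL}}(y(x_i) \,\|\, \hat y(x_i))$.

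First, because $y(x)$ is one-hot at the coordinate $k^*(x) = \arg\max_k s(f(x), c_k)$, the KL divergence collapses to the negative log-softmax of the assigned coordinate:
\begin{align}
D_{\text{KL}}(y(x) \,\|\, \hat y(x)) = -\log \hat y_{k^*(x)}(x) = -\, s(f(x), c_{k^*(x)}) + \log \sum_{j=1}^{C} e^{s(f(x), c_j)}.
\end{align}
Summing this identity over the $n$ instances splits $L_{\text{global}}$ cleanly into two additive pieces, and the rest of the argument is just interpreting each piece.

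Second, I would identify the first piece $-\tfrac{1}{n}\sum_i s(f(x_i), c_{k^*(x_i)})$ with the K-Means objective. Taking $s$ to be the natural similarity consistent with the $\ell_2$-normalized feature space used in Sec.~\ref{sec:unsup_rep_learning} (cosine similarity), the identity $\|f(x) - c\|^2 = 2 - 2\, s(f(x), c)$ on the unit sphere shows that this piece is, up to an additive constant, the within-cluster sum of squares $\sum_{i=1}^{m}\sum_{V \in S_i}\|V - c_i\|^2$ of Eqn.~\ref{eqn:kmeans}, with $S_i$ determined by the current hard assignment. The same identification carries over to the Euclidean case by taking $s(u,v) = -\tfrac{1}{2}\|u-v\|^2$. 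Since the feature space is fixed, minimizing this piece in $\{c_i\}$ is exactly the centroid step of K-Means.

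Third, I would interpret the residual $\tfrac{1}{n}\sum_i \log\sum_j e^{s(f(x_i), c_j)}$ as an inter-cluster repulsion regularizer. Log-sum-exp is a smooth upper bound on $\max_j s(f(x_i), c_j)$ and is monotonically increasing in each $s(f(x_i), c_j)$, so minimizing it discourages any non-assigned centroid $c_j$, $j \neq k^*(x_i)$, from being close to $f(x_i)$. Aggregated over all instances belonging to a cluster $S_{k^*}$, this pushes every other centroid $c_j$ away from the mass of $S_{k^*}$, which is precisely an inter-cluster distance penalty in the spirit of the regularizer in Eqn.~\ref{utility-reg}, and therefore encourages additional diversity on top of the K-Means cost.

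The main obstacle will be the K-Means identification: it requires pinning down $s$ so that $-s(u,v)$ equals $\|u-v\|^2$ up to a partition-independent constant, after which everything else is bookkeeping. For the normalized feature space used throughout the paper this is immediate via the unit-sphere identity above, so I would state it as a short lemma; the regularizer interpretation then follows from monotonicity and convexity of log-sum-exp, and the observation falls out by combining the two pieces.
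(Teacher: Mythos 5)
Your proposal is correct and follows essentially the same route as the paper: both expand the KL divergence using the one-hot assignment to obtain a within-cluster distance term plus a log-sum-exp residual, identify the former with the K-Means objective of Eqn.~\ref{eqn:kmeans} for squared Euclidean distance $d=-s$, and read the latter as an inter-cluster repulsion term that encourages diversity. The one point the paper treats more carefully is that the log-sum-exp also acts on the \emph{assigned} centroid $c_{k^*(x)}$ (so you should not restrict the repulsion to $j \neq k^*(x_i)$); a short gradient computation showing the regularizer contributes at most $-\hat y_{k^*(x)}(x)\in[-1,0]$ against the main term's $+1$ is what confirms the net effect on the assigned cluster remains attractive.
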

\begin{proof}
Recall that we have one-hot assignment $y(x)$ and soft assignment $\hat{y}(x)$  defined as:
\begin{align}
y_i(x) = 
\begin{cases}
    1,& \text{if } i = \arg\min_{k \in \{1, ..., C\}} s({f(x)}, {c_k})\\
    0,              & \text{otherwise}
\end{cases}
\end{align}
\begin{align}
\hat{y}_i(x)= \frac{e^{s({f(x)}, {c_i})}}{\sum_{j=1}^C e^{s({f(x)}, {c_j})}}
\end{align}
where $c_i \in \mathbb{R}^{d}, k \in \{1, ..., C\}$ are learnable centroids with feature dimension $d$, $s(\cdot, \cdot)$ is a function that quantifies the similarity between two points in a feature space $\mathbb{R}^{d}$ and $f(x) \in \mathbb{R}^{d}$ is a function that maps an input $x$ to a feature space, which is implemented by a CNN.

Then our global loss is defined as:
\begin{align}
L_{\text{global}}(\mathcal{X}) = \frac{1}{|\mathcal{X}|}\sum_{x \in \mathcal{X}} D_{\text{KL}}({y}(x) || \hat {y}(x)) F(\hat {y}(x))
\end{align}

When $\tau = 0$, the filtering function $F(\hat {y}(x)) = \mathbb{1}(\max( \hat {y}(x)) \geq \tau)$ has no effect.
Then we can simplify our global loss and turn the loss into the following form:

\begin{align}
L_{\text{global}}(\mathcal{X}) = \frac{1}{|\mathcal{X}|}\sum_{x \in \mathcal{X}} D_{\text{KL}}({y}(x) || \hat {y}(x))
\end{align}

Since the feature space is fixed, i.e. does not change across the loss optimization, $f(x)$ remains constant, and the goal is to find the optimal centroids $\{c^*_i\}_{i=1}^C$ that minimizes the loss:

\begin{align}
\{c^*_i\}_{i=1}^C = \argmin_{\{c_i\}_{i=1}^C} L_{\text{global}}(\mathcal{X})
\end{align}

We then get:

\begin{align}
    \{c^*_i\}_{i=1}^C 
              &= \argmin_{\{c_i\}_{i=1}^C} L_{\text{global}}(\mathcal{X}) \\
              &= \argmin_{\{c_i\}_{i=1}^C} \frac{1}{|\mathcal{X}|}\sum_{x \in \mathcal{X}} D_{\text{KL}}({y}(x) || \hat {y}(x)) \\
              &= \argmin_{\{c_i\}_{i=1}^C} \sum_{x \in \mathcal{X}}\sum_{i=1}^C -{y}(x) \log \frac{\hat {y}(x)}{{y}(x)} \label{eq:1}
\end{align}

Since ${y}(x)$ is a one-hot vector, we can simplify \eqref{eq:1} further.
Define $M(x) = \argmin_k s(f(x), c_k)$ and $s(\cdot, \cdot) = -d(\cdot, \cdot)$ for some metric $d(\cdot, \cdot)$,
\begin{align}
    \{c^*_i\}_{i=1}^C 
              &= \argmin_{\{c_i\}_{i=1}^C} \sum_{x \in \mathcal{X}} - \log \hat {y}(x)_{M(x)} \\
              &= \argmin_{\{c_i\}_{i=1}^C} \sum_{x \in \mathcal{X}} - \log \frac{e^{s(f(x), c_{M(x)})}}{\sum_{k=1}^C e^{s(f(x), c_k)}} \\
              &= \argmin_{\{c_i\}_{i=1}^C} \sum_{x \in \mathcal{X}} - \log e^{s(f(x), c_{M(x)})} + \log{\sum_{k=1}^C e^{s(f(x), c_k)}} \\
              &= \argmin_{\{c_i\}_{i=1}^C} \sum_{x \in \mathcal{X}} - \log e^{-d(f(x), c_{M(x)})} + \log{\sum_{k=1}^C e^{-d(f(x), c_k)}} \\
              &= \argmin_{\{c_i\}_{i=1}^C} \sum_{x \in \mathcal{X}} d(f(x), c_{M(x)}) + \log{\sum_{k=1}^C e^{-d(f(x), c_k)}}
\end{align}

If we let $d(\cdot, \cdot)$ be squared L2 distance, the expression can be decomposed into the sum of a square L2 distance with an regularization term:

\begin{align}
\{c^*_i\}_{i=1}^C &= \argmin_{\{c_i\}_{i=1}^C} ~(\text{Main objective} + \text{Reg})
\end{align} 
where
\begin{align}
\text{Main objective} &= \sum_{x \in \mathcal{X}} ||x - c_{M(x)}||^2 \label{eq:main-term} \\
\text{Reg} &= \log{\sum_{k=1}^C e^{-d(f(x), c_k)}} = \log{\sum_{k=1}^C e^{-||f(x) - c_k||^2}} \label{eq:regularization-term}
\end{align}

Minimizing the regularization term is equivalent to maximizing the sample's distance to all clusters $d(f(x), c_k)$, $\forall k \in \{1, ..., C\}$. This pushes apart different clusters and contributes to the diversity between clusters:

For $k \neq M(x)$, there is only force from the regularization term, which pushes apart a sample and other clusters that it does not belong to.

For $k = M(x)$, there are two forces: one from the main objective (\eqref{eq:main-term}) and one from the regularization term (\eqref{eq:regularization-term}). The regularization term pushes the sample away from its assigned cluster, i.e. the regularization term maximizes $d(x, c_k)$ also for $k = M(x)$, while the main objective minimizes $d(x, c_{M(x)})$.

We can quantify the net effect for $k = M(x)$ scenario. The gradient of the regularization term w.r.t $d(x, c_{M(x)})$ is $-\hat {y}(x)_{M(x)}$, and the gradient from the main objective to $d(x, c_{M(x)})$ is always 1. As $\hat {y}(x)$ is a probability distribution, $0 \leq \hat {y}(x)_{M(x)} \leq 1$. Therefore, the net effect is still minimizing $d(x, c_{M(x)})$, i.e. attracting $x$ to its cluster center $c_{M(x)}$ and $c_{M(x)}$ to $x$.

Therefore, \eqref{eq:regularization-term} is a regularization term aiming for additional diversity. 

Now we consider the objective without the regularization term, and we define the centroids without the regularization term $\{c'_i\}_{i=1}^C$ as:
\begin{align}
\{c'_i\}_{i=1}^C &= \argmin_{\{c_i\}_{i=1}^C} \sum_{x \in \mathcal{X}} ||x - c_{M(x)}||^2\label{eq:c_prime}
\end{align}

Since there is no interdependence between $c_i$ and $c_j$, where $i \neq j$, we can define
\begin{align}
\mathcal{X}_{k}' = \{x \in \mathcal{X} \mid M(x) = k\}
\end{align}
and write \eqref{eq:c_prime} as

\begin{align}
\{c'_i\}_{i=1}^C &= \argmin_{\{c_i\}_{i=1}^C} \sum_{k=1}^C h_k\label{eq:c_prime2}\\
h_k &= \sum_{x \in \mathcal{X}_{k}'} ||x - c_k||^2
\end{align}

Then the solution to \eqref{eq:c_prime2} is to minimize the individual $h_k, \forall k \in \{1, 2, ..., C\}$, i.e. the sum of squared L2 distances between a cluster and the samples that belong to it.

Without loss of generalizability, we analyze $c_1'$,

\begin{align}
\{c'_i\}_{i=1}^C &= \argmin_{c_1} h_1 \\
                 &= \argmin_{c_1} \sum_{x \in \mathcal{X}_{1}'} ||x - c_1||^2 \label{eq:c1_expand}
\end{align}

The gradient of the objective in \eqref{eq:c1_expand} w.r.t $c_1$ is

\begin{align}
\nabla_{c_1}h_1 = -2\sum_{x \in \mathcal{X}_{1}'} x - c_1 \label{eq:c1_grad}
\end{align}

since the objective is convex, \eqref{eq:c1_grad} indicates the unique minimum is reached when

\begin{align}
\sum_{x \in \mathcal{X}_{1}'} x - c_1 &= 0 \\
c_1 &= \frac{1}{|\mathcal{X}_{1}'|}\sum_{x \in \mathcal{X}_{1}'} x
\end{align}

This means that $c_i'$ is the mean of all sample vectors that belong to cluster $i$. This indicates that \eqref{eq:c_prime} is equivalent to the objective of $K$-Means clustering, which aims to minimize the square L2 distance between a group of samples that are assigned to a specific cluster and the mean of this group of samples.

Therefore, $L_{\text{global}}(\mathcal{X})$ has same objective with $K$-Means clustering with an extra regularization term on maximizing the inter-cluster sample distances for cluster diversity.
\end{proof}

\subsection{Non-optimality of Two Types of Collapses}

\begin{observation}
Neither one-cluster nor even-distribution collapse is optimal to our local constraint, i.e. $P(z(x'), \bar z, t) \neq \hat y({x})$ for either collapse.
\end{observation}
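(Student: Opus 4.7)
My plan is to handle the two collapses separately, exploiting the divergence of the logit-adjustment term $\hat P(z,\bar z)=z-\alpha\log\bar z$ in one case and the entropy-reducing action of the sharpening $\sigma(\cdot/t)$ with $t<1$ in the other.

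\emph{One-cluster collapse.} First I would suppose $\hat y(x)\approx e_k$ uniformly across the dataset, so that the EMA $\bar z=\mu\,\sigma(z)+(1-\mu)\,\bar z$ converges to $e_k$ and $\log\bar z_j\to-\infty$ for every $j\neq k$. The adjusted logits $\hat P(z_j,\bar z_j)=z_j-\alpha\log\bar z_j$ therefore diverge to $+\infty$ on the off-cluster coordinates while remaining bounded on coordinate $k$. After softmax with temperature $t>0$, the distribution $P(z,\bar z,t)$ puts vanishing mass on $k$ and concentrates on $\{j:j\neq k\}$, so $P\neq e_k=\hat y$ and the one-cluster configuration cannot be optimal.

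\emph{Even-distribution collapse.} Next I would assume $\hat y(x)\approx(1/C)\mathbf{1}$ for every $x$. This forces the raw logits $z(x)$ to be nearly constant vectors and $\bar z$ to be nearly uniform, so $-\alpha\log\bar z$ acts as a near-constant shift that preserves near-uniformity of $\hat P$. The sharpening $\sigma(\hat P/t)$ with $t<1$ then amplifies any residual coordinate-to-coordinate difference in $\hat P$ into a strictly more peaked distribution, so $P$ has strictly lower entropy than the near-uniform $\hat y$. A direct entropy comparison therefore yields $P\neq\hat y$.

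\emph{Main obstacle.} The one-cluster case is clean because $\log\bar z_j$ diverges and forces strict inequality. The subtlety sits in the exactly uniform corner case of the even-distribution collapse, where both sides evaluate to $(1/C)\mathbf{1}$ and genuinely coincide. My plan is to rule this out either by noting that the uniform configuration is unstable under sharpening (any perturbation is magnified, so the point cannot serve as an attracting minimum of the local loss), or by coupling with the global loss, which already penalizes uniform $\hat y$ through its hard pseudo-label target $y(x)$. The real content of the proof is therefore the careful entropy-based argument for the even-distribution case; the one-cluster case follows from elementary limits on $\hat P$.
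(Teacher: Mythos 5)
Your proposal is correct and follows essentially the same route as the paper's proof: treat the two collapses separately, use the fact that the KL term is minimized only at equality, show that the logit adjustment $-\alpha\log\bar z$ destroys the one-hot target in the one-cluster case, and show that it reduces to an additive constant (absorbed by the softmax) in the even-distribution case so that the $t<1$ sharpening forces $P$ strictly away from the near-uniform $\hat y$. The only cosmetic differences are that you argue the one-cluster case via divergence of $\log\bar z_j$ on the off-cluster coordinates where the paper uses the exact cancellation $z-\alpha\log\sigma(z)=c\,\vec{1}_d$ for $\alpha=1$ (your version matches the paper's remark about $\alpha>1$ overshooting past uniform), and that you are slightly more careful about the exactly-uniform corner case, which the paper simply dismisses as unreachable during optimization.
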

\begin{proof}

Let $z(x) \in \mathbb{R}^{d}$ be the logits of $x$ and $\bar z \in \mathbb{R}^{d}$ be the moving average of the batch mean of $\sigma(z(x'))$, with $\sigma(\cdot)$ as the softmax function and $\mu$ as the momentum:
\begin{align}
z_k(x) &= s(f(x), c_k) = f(x)^\intercal c_k \\
\bar z &\leftarrow \mu (\frac{1}{n}\sum_{i=1}^n\sigma(z(x_i')))\!+\!(1\!-\!\mu)\bar z  \text{~~at each iteration}
\end{align}

Recall that we define our anti-collapsing function $P(z, \bar z, t)$ with two components, as:
\begin{align}
\hat{P}({z}, \bar{z}) &= {z}\!-\alpha\!\log \bar z \\
[P'(\hat{z}, t)]_i &= \frac{\exp(\hat{z}_i/t)}{\sum_j\exp(\hat{z}_j/t)} \\
P(z, \bar z, t) &= P'(\hat{P}(z, \bar{z}), t) \\
\end{align}
where $\alpha$ is the adjustment factor and $t$ is the temperature.

Then the local loss is formulated as:
\begin{align}
l_{\text{local}}(x_i, x'_i) &= D_{\text{KL}}(P(z(x'_i), \bar z, t) || \hat y(x_i)) \\
L_{\text{local}}(\{{x_i}\}_{i=1}^n) &= \frac{1}{n} \sum_{i=1}^n l_{\text{local}}(x_i, x'_i)
\end{align}
where $x_i'$ is a randomly picked neighbor from the $k$ nearest neighbors of $x_i$.

According to Jensen's inequality, KL divergence $D_{\text{KL}}(p || q)$ only achieves optimality, with gradient norm 0, when $p = q$. To prove a solution is not optimal for $l_{\text{local}}(x, x')$, we only need to prove $P(z(x'), \bar z, t) \neq \hat y(x)$.

For one-cluster collapse, where the neural network assigns all samples to the same cluster with high confidence, both $\hat y(x')$ and $\hat y(x)$ are very close to a one-hot distribution, with $\sigma(z(x_i')) \approx \frac{1}{n}\sum_{i=1}^n\sigma(z(x_i')), \forall i \in \{1, 2, ... , n\}$.

Assuming that we have already taken enough iterations for the moving average to catch up in this collapsing situation, that the difference between $\bar z$ and $\frac{1}{n}\sum_{i=1}^n\sigma(z_i)$ is negligible: $\bar z \approx \frac{1}{n}\sum_{i=1}^n\sigma(z(x_i'))$.
We have:
\begin{align}
\hat{P}(z, \bar{z}) &= {z}\!-\alpha\!\log \bar z \\
                &\approx {z}\!-\alpha\!\log \sigma(z) \\
                &= c\vec{1}_d
\end{align}
where $c$ is a constant and $\vec{1}_d \in \mathbb{R}^d$ is a vector of 1 when $\alpha=1$. With $\alpha > 1$, $\hat{P}(z, \bar{z})$ has an even stronger adjustment effect that pushes the target distribution even farther than uniform distribution, which is found to be beneficial in our circumstances with an optimizer that is using momentum for faster convergence.


Note that $\hat y(x)$ is a distribution close to one-hot, as above, we have:
\begin{align}
P(z, \bar z, t) &= P'(\hat{P}(z, \bar{z}), t) \\
                &\approx P'(c\vec{1}_d, t) \\
                &= \frac{1}{C}\vec{1}_d \\
                &\neq \hat y(x)
\end{align}
where C is the number of clusters.

Therefore, our loss will drive the distribution back to one that is less extreme, i.e. close to uniform distribution, and thus one-cluster collapse is not an optimum for $l_{\text{local}}(x)$.

Now we consider even distribution collapse, where samples are assigned to a distribution close to uniform distribution, with mean distribution of all samples being uniform, i.e. the mapping function $f(x)$ assigns similar logits and thus similar distributions to all clusters with small variations that are drawn from a distribution with zero-mean. Note that exact uniform distribution, where the variation exactly equals to 0, is hardly achieved in the optimization process and thus is not a concern for us.

Consider one such sample $x$ with neighbors $x_i'$ and predicted logits $z$ from the batch. Here we also assume that the exponential moving average catches up, so that we have: $\bar z \approx \frac{1}{C}\vec{1}_d$.
In this case, $\hat{P}(z, \bar{z})$:
\begin{align}
\hat{P}(z, \bar{z}) &= {z}\!-\alpha\!\log \bar z \\
                &\approx {z}\!-\alpha\!\log \frac{1}{C}\vec{1}_d \\
                &= {z}\!-\alpha\!\log \frac{1}{C}
\end{align}

Note that $P'(\hat z(x), t)$ is by design invariant to an additive constant on the $\hat z(x)$:
\begin{align}
    [P'(\hat{z}(x) + c, t)]_k &= \frac{\exp((\hat{z}_k(x) + c)/t)}{\sum_j\exp((\hat{z}_j(x) + c)/t)} \\
    &= \frac{\exp(\hat{z}_k(x)/t) \exp(c/t)}{\sum_j\exp(\hat{z}_j(x)/t) \exp(c/t)} \\
    &= \frac{\exp(\hat{z}_k(x)/t)}{\sum_j\exp(\hat{z}_j(x)/t)} \\
    &= [P'(\hat{z}(x), t)]_k
\end{align}

Then we consider the net effect of $\hat{P}$ and $P'$:
\begin{align}
I(z(x'), \bar z, t) &= P'(\hat{P}(z(x'), \bar{z}), t) \\
                &\approx P'(z(x')\!-\alpha\!\log \frac{1}{C}, t) \\
                &= P'(z(x'), t) \\
                &\neq \hat{y}(x)
\end{align}

The last step comes from the fact that $z(x')$ contains some variations and is not a uniform distribution. In this case, $P'(z(x'), t)$ will enlarge the dimension of z which has maximum value and make other dimension smaller in the output probability,
forcing the softmax distribution to be spikier during training.
Therefore, $I(z(x'), \bar z, t)$ will have a distribution that makes the variation more significant, driving the distribution out of mean cluster collapse.



\end{proof}

\def\algRegularization#1{
\begin{algorithm}[#1]
\caption{The iterative regularization algorithm}\label{alg:regularization}
\begin{algorithmic}
\REQUIRE
\STATE $\{U(V_i)| V_i \in \sV\}$: The unregularized utility for each vertex $V_i$
\STATE $\lambda$: weight for applying regularization
\STATE $m_{\text{reg}}$: momentum in exponential moving average \\
\STATE $l$: the number of iterations

\hspace*{-\algorithmicindent}\textbf{Procedure:} \\
\STATE {$\bar{\text{Reg}}(V_i, 0) \gets 0$, $\forall V_i \in \sV$}
\STATE $\hat{\sV}^0 \gets $ samples with largest $U(V_i)$ in each cluster
\FOR{$t=1$ \TO $l$}
    \FORALL{$V_i \in \sV$}
        \STATE $\text{Reg}(V_i, t) \gets \sum_{\hat{V}_j^{t-1}\not\in S_i} \frac{1}{\rVert V_i - \hat{V}_j^{t-1} \rVert^\alpha}$
        \STATE {$\bar{\text{Reg}}(V_i, t) \gets m_{\text{reg}}\cdot\bar{\text{Reg}}(V_i, t-1) + (1-m_{\text{reg}})\cdot\text{Reg}(V_i, t)$}
        \STATE $U'(V_i, t) \gets U(V_i) - \lambda\cdot\bar{\text{Reg}}(V_i, t)$
    \ENDFOR
    \STATE $\hat{\sV}^t \gets $ samples with largest $U'(V_i, t)$ in each cluster
\ENDFOR

\RETURN $\hat{\sV}^{l}$
\end{algorithmic}
\end{algorithm}
}


\def\tabCifarBudget#1{
    \begin{table*}[#1]
    \centering
    \tablestyle{2pt}{1.0}
    \normalsize
    \tablefont{
    \begin{tabular}{l|l|l|l}
    \shline
    Sampling Method & \multicolumn{1}{c}{40 labels} & \multicolumn{1}{c}{100 labels} & \multicolumn{1}{c}{250 labels} \\
    \shline
    Random & 60.8 & 73.7 & 79.4 \\
    \textcolor{gray}{Stratified$\dagger$} & \textcolor{gray}{66.5} &
     \textcolor{gray}{74.5} & \textcolor{gray}{80.4} \\
    \hline
    \rowcolor{LightGreen}
    USL (Ours) & \textbf{76.6} \textbf{\color{Green}{$\uparrow$ 15.8}} & \textbf{79.0} \textbf{\color{Green}{$\uparrow$ 5.3}} & \textbf{82.1} \textbf{\color{Green}{$\uparrow$ 2.7}} \\
    \rowcolor{LightGreen}
    USL-T (Ours) & \textbf{76.1} \textbf{\color{Green}{$\uparrow$ 15.3}} & - & - \\
    \shline
    \end{tabular}
    }
    \caption{CIFAR-10 experiments with transfer-learning based SSL method SimCLRv2-CLD \cite{chen2020big, wang2021unsupervised}, with the mean of 5 different folds and 2 runs in each fold. $\dagger$: Even though stratified selection uses more information and is not a fair comparison, we still outperform stratified selection.}
    \label{tab:cifar-10-budget}
    \end{table*}
}

\def\tabCifarBudgetFixMatch#1{
    \begin{table}[#1]
    \centering
    \tablestyle{3pt}{1}
    \normalsize
    \tablefont{
    \begin{tabular}{l|l|l|l}
    \shline
    & \multicolumn{3}{c}{Accuracy (\%)} \\
    Sample Selection &  \multicolumn{1}{c}{40 labels}   & \multicolumn{1}{c}{100 labels} & \multicolumn{1}{c}{250 labels} \\
    \shline
    Random           & 82.9 & 88.7 & 93.3 \\
    \textcolor{gray}{Stratified$\dagger$*} & \textcolor{gray}{88.6} & \textcolor{gray}{90.2} & \textcolor{gray}{94.9} \\ \hline
    \rowcolor{LightGreen}
    USL (Ours)    & \textbf{90.4} {\bf\color{Green}{$\uparrow$ 7.5}} & \textbf{93.2} {\bf\color{Green}{$\uparrow$ 4.5}} & \textbf{94.0} {\bf\color{Green}{$\uparrow$ 0.7}} \\
    \rowcolor{LightGreen}
    USL-T (Ours)    & \textbf{93.5} {\bf\color{Green}{$\uparrow$ 10.6}} & - & - \\
    \shline
    \end{tabular}
    }
    \caption{CIFAR-10 experiments with FixMatch \cite{sohn2020fixmatch}. 
    $\dagger$: Not a fair comparison with us because it assumes balanced labeled data available and leaks information about ground truth labels. *: results from \cite{sohn2020fixmatch}.}
    \label{tab:cifar-10-fixmatch-budget}
    \end{table}
}

\subsection{Additional Experiment Results}
\subsubsection{Varying Budgets.}
Table \ref{tab:cifar-10-budget} and \ref{tab:cifar-10-fixmatch-budget} indicate the accuracy with different budget levels on SimCLRv2-CLD and FixMatch, respectively. For SimCLRv2-CLD, our method consistently outperforms not only random selection but also stratified selection for \textit{all} the low-label settings. Our improvement is prominent especially when the number of selected samples is low. In 40 (250) labels case, we are able to achieve a 15.8\% (2.7\%) improvement. For FixMatch, we consistently outperform random baselines and even outperform stratified sampling, which makes use of ground truth labels of unlabeled data, in most of the settings.


\tabCifarBudget{h!}
\tabCifarBudgetFixMatch{h!}

\def\tabAdditionalUSLT#1{
    \begin{table}[#1]
    \centering
    \tablestyle{2pt}{1.0}
    \normalsize
    \tablefont{
    \begin{tabular}{l|c}
    \shline
    \textbf{ImageNet} & SimCLRv2 \\
    \shline
    Random           & 33.2 \\
    \textcolor{gray}{Stratified$\dagger$} & \textcolor{gray}{36.4}\\
    \hline
    \rowcolor{LightGreen}
    USL-MoCo (Ours)    & 39.8 {\color{Green}\bf$\uparrow$6.6} \\
    \rowcolor{LightGreen}
    USL-CLIP (Ours)    & 40.4 {\color{Green}\bf$\uparrow$7.2} \\
    \rowcolor{LightGreen}
    USL-T (Ours)  & \textbf{41.3} {\color{Green}\bf$\uparrow$8.1} \\
    \shline
    \end{tabular}
    }
    \caption{Additional USL-T experiments with SimCLRv2 \cite{chen2020big} on ImageNet. On ImageNet, USL-T also shows promising improvements, reaching a $6.6\%$ improvement when compared to baseline.
    $\dagger$: Although stratified selection utilizes ground truth, we still outperform it without using labeled information.}
    \label{tab:additional-uslt}
    \end{table}
}

\tabAdditionalUSLT{t!}

\subsubsection{USL-T on ImageNet.}
We also provide experimental results of USL-T on ImageNet in Table \ref{tab:additional-uslt}. As for the hyperparams for USL-T, we use the same hyperparams as shown in the hyperparam table in the main text. For ImageNet, to create a fair comparison, USL-T model is initialized with weights of MoCov2.

\def\tabMedMNIST#1{
    \begin{table}[#1]
    \centering
    \tablestyle{20pt}{1.0}
    \normalsize
    \tablefont{
    \begin{tabular}{l|c}
    \shline
    Selection Method & Accuracy \\
    \shline
    Random           & 77.17 $\pm$ 6.98 \\
    \textcolor{gray}{Stratified$\dagger$} & 
                   \textcolor{gray}{80.46 $\pm$ 7.88} \\
    \hline
    \rowcolor{LightGreen}
    USL (Ours)    & \textbf{88.06} $\pm$ 1.41 {\color{Green}\bf$\uparrow$10.89} \\
    \shline
    \end{tabular}
    }
    \caption{USL shows remarkable generalizability across domains without \textit{any} pre-training or fine-tuning on the target domain in BloodMNIST \cite{yang2021medmnist}. Annotated samples are chosen by a self-supervised CLD model trained on CIFAR-10 and \textit{never exposed to medical images}. We adopt the same hyperparams as FixMatch on CIFAR-10, except that we train only for 64 epochs. Mean and standard deviation are taken over three runs.
    $\dagger$:~outperforming stratified with less information.
    }
    \label{tab:medmnist}
    \end{table}
}

\tabMedMNIST{t!}

\subsubsection{Cross-domain Generalizability on MedMNIST.}
We show USL's impressive generalizability in the main text through selective labeling with CLIP features in the ImageNet training set. Furthermore, to analyze whether USL's generalizability holds \textit{across domains}, we use the exact same CLD model pretrained on CIFAR-10 to select samples in the BloodMNIST dataset of the MedMNISTv2 collection \cite{yang2021medmnist}, which is a dataset in medical imaging domain. BloodMNIST contains about 18k blood cell images under microscope in 8 classes, which is drastically different from CIFAR-10's domain, but as shown in Table \ref{tab:medmnist}, our model with FixMatch performs $10.89\%$ better than random sampling and $7.60\%$ better than stratified sampling, further illustrating the possibility of a general sample selection model across image domains.

\newpage

\subsection{CIFAR-10 Visualizations on Selected Samples}
\def\figCIFARVisualization#1{
\begin{figure}[#1]
\tablestyle{0pt}{0}
\begin{tabular}[t]{l}
\begin{subfigure}{1.0\linewidth}
    \centering
    \includegraphics[width=0.6\linewidth]{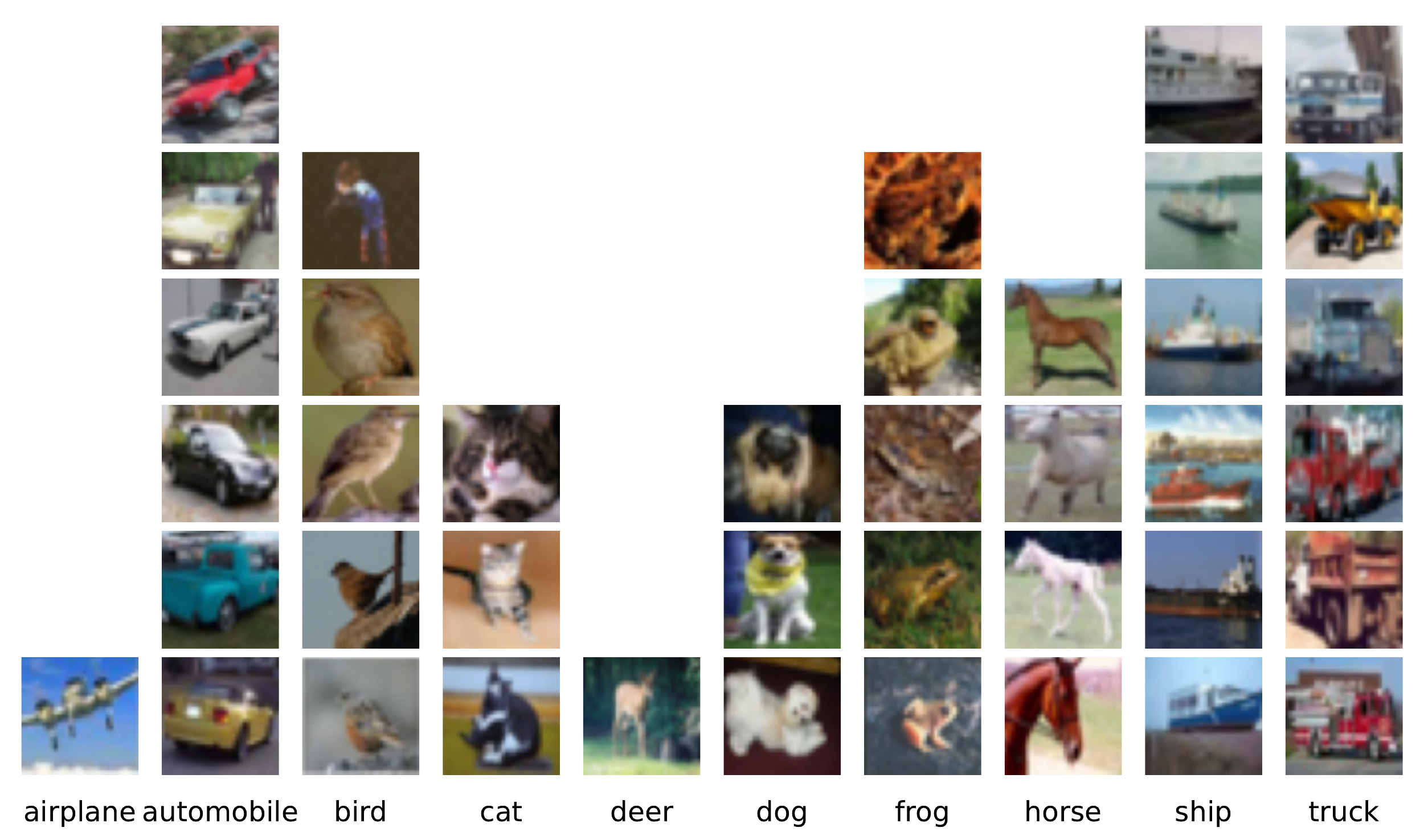}
    \label{fig:cifar-visualization-random}
    \caption{Random Selection: 40 Samples}
\end{subfigure} \\
\begin{subfigure}{1.0\linewidth}
    \centering
    \includegraphics[width=0.6\linewidth]{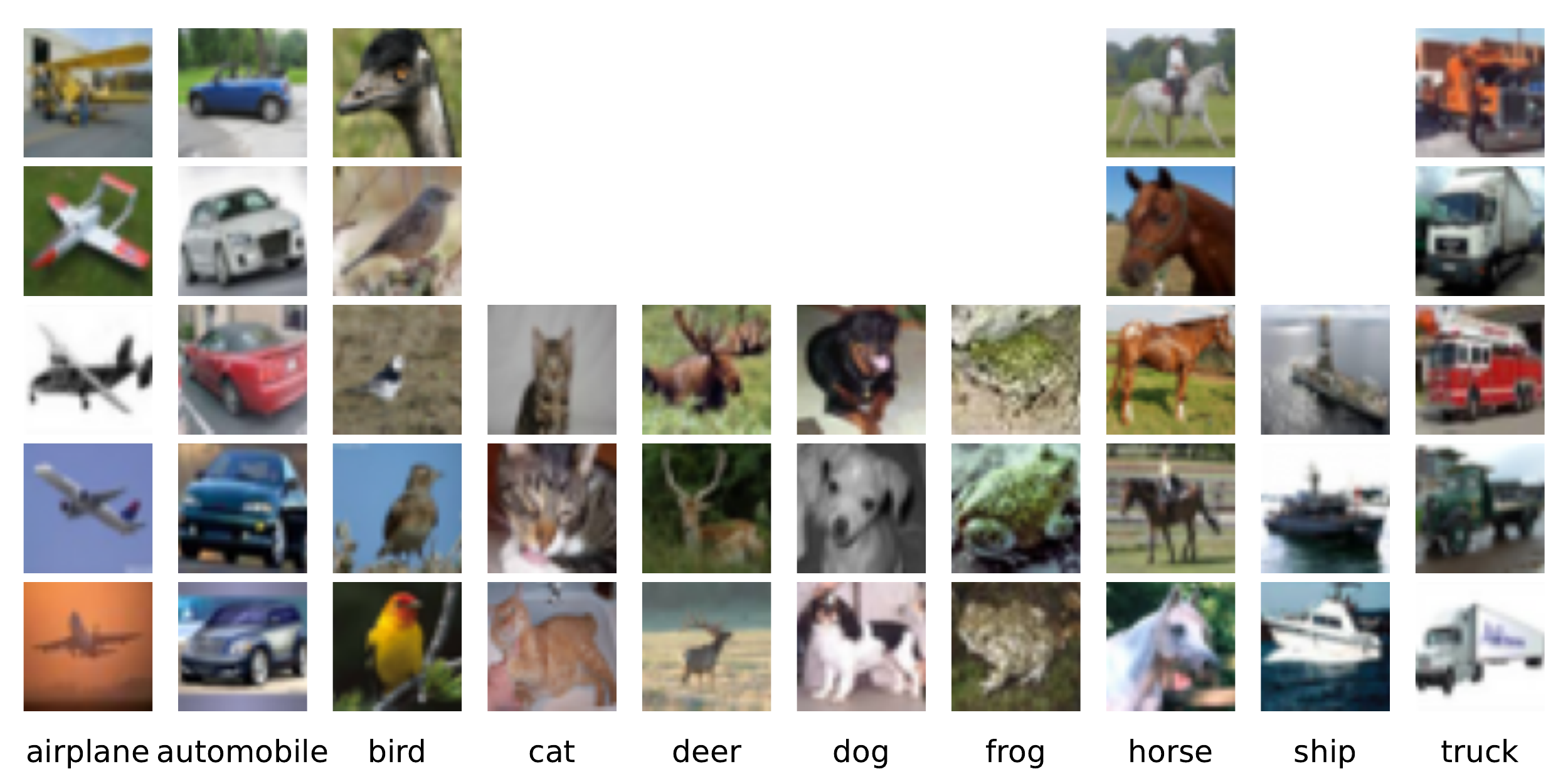}
    \label{fig:cifar-visualization-usl-top40}
    \caption{Ours (USL): Top-40 Selection}
\end{subfigure} \\
\begin{subfigure}{1.0\linewidth}
    \centering
    \includegraphics[width=0.6\linewidth]{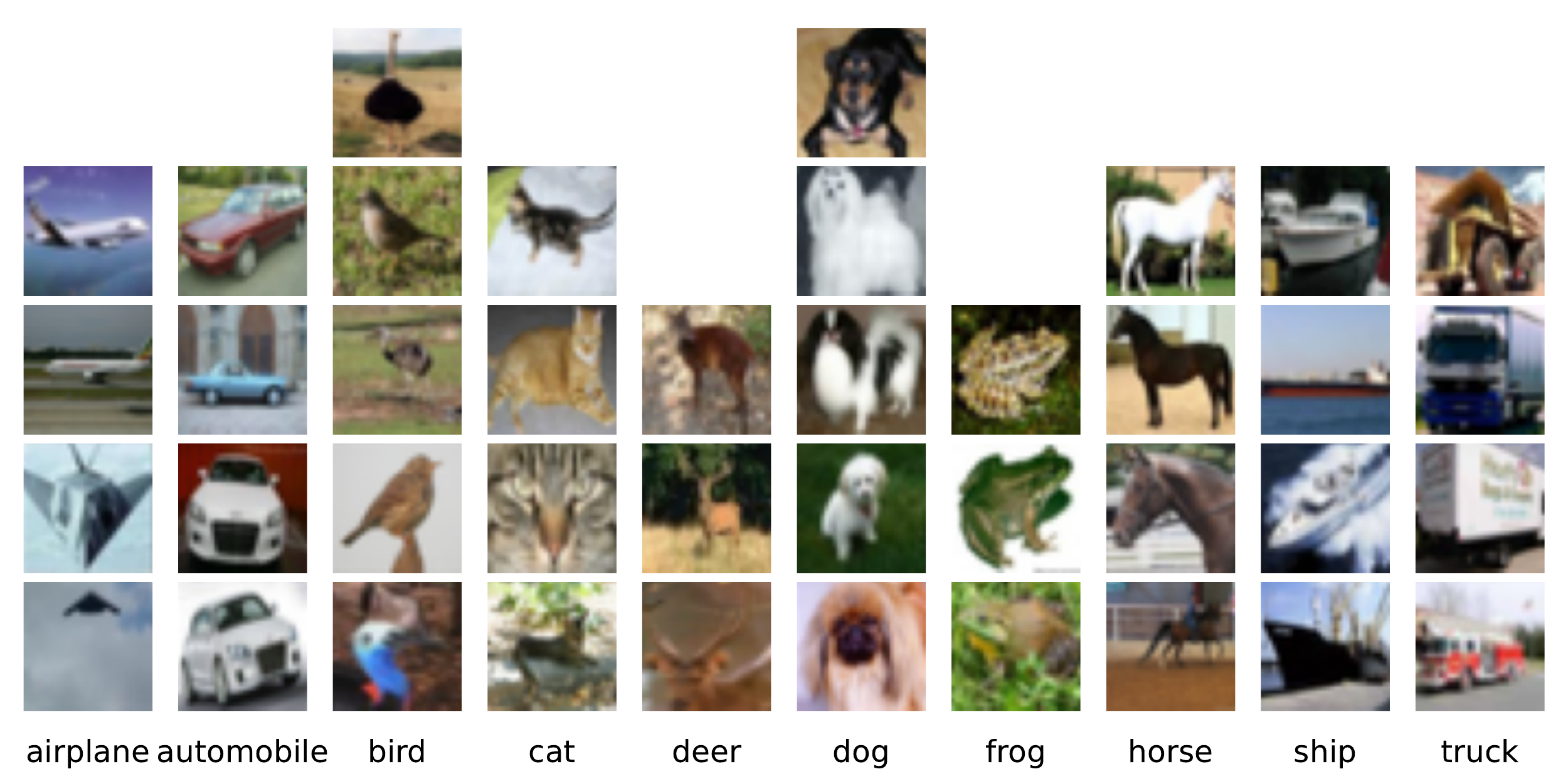}
    \label{fig:cifar-visualization-usl-t-top40}
    \caption{Ours (USL-T): Top-40 Selection}
\end{subfigure} \\
\begin{subfigure}{1.0\linewidth}
    \centering
    \includegraphics[width=0.6\linewidth]{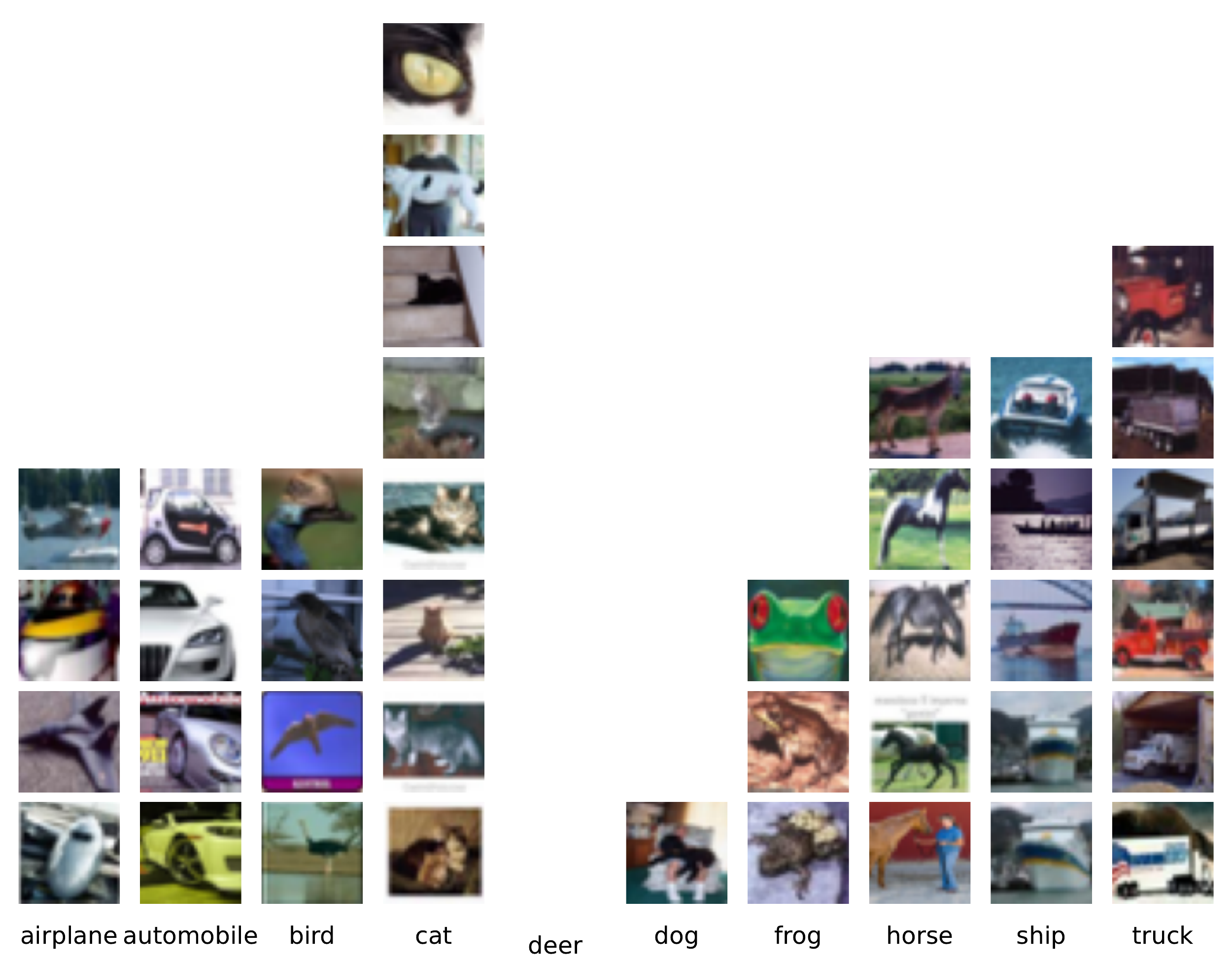}
    \label{fig:cifar-visualization-least40}
    \caption{Ours (USL): 40 Samples with Least Utility}
\end{subfigure}
\end{tabular}
\caption{Visualizations of selected samples in CIFAR-10: Our selections are mostly balanced and representative. In contrast, random selection is very imbalanced and the samples that we are least likely to select are almost always outliers.}
\label{fig:cifar-visualization}
\end{figure}
}

We visualize the top-40 and least-40 of our USL and USL-T selected samples in CIFAR-10, as in \fig{cifar-visualization}. For clarity, we put images into buckets according to their labels. Samples from random selection are highly imbalanced in terms of semantic class distribution and coverage. Our top selected samples from USL and USL-T are representative and diverse. The representativeness could be seen from that the objects are almost always appear without any occlusion or any truncation. In contrast, the 40 samples that we are least likely to select are mainly outliers that could mislead the classifier.

\newpage
\FloatBarrier
\figCIFARVisualization{h!}
\FloatBarrier

\subsection{Pseudo-code for the Regularization Algorithm}
\label{appendix-regularization}
We summarize the regularization algorithm in pseudo-code in Alg. \ref{alg:regularization}. In Alg. \ref{alg:regularization}, we first obtain $\hat{\sV}^0$, the selection without regularization, and set the moving average regularizer $\hat{\text{Reg}}(V_i, 0)$ to 0 for every $V_i \in \sV$; then in each iteration, we update $\hat{\text{Reg}}(V_i, t)$ with moving average from a closeness measurement to other previously selected samples, where $t$ is the index of current iteration. We re-select samples according to regularized utility at the end of each iteration, with $\lambda$ being a balancing factor. In the end, the selection from the last iteration is returned.

\algRegularization{h!}

\subsection{Using Euclidean Distance or Cosine Similarity?}
\label{appendix-cosine-similarity}
Because the features of all instances are projected to a unit hypersphere with L2 normalization, theoretically, maximizing the cosine similarity between two nodes is equivalent to maximizing the inverse of Euclidean distance between two nodes: 
\begin{align}
\argmax_{i, j}(\lVert f(x_i) - f(x_j) \rVert_2)^{-1} &= \argmax_{i, j}(2 - 2\cos(f(x_i), f(x_j)))^{-1} \\
                                                     &= \argmax_{i, j}(\cos(f(x_i), f(x_j)))
\end{align}

However, empirically, using maximizing the inverse of Euclidean distance $1/d(\cdot)$ as the objective function performs better than maximizing the cosine similarity $\cos(x)$. The reason is that, when two nodes are very close to each others, $1/d(\cdot)$ is more sensitive to the change of its Euclidean distance, whereas $\cos(\cdot)$ tends to be saturated and insensitive to small changes. Therefore, the function $1/d(\cdot)$ has the desired property of non-saturating and can better focus on the distance difference with closest neighbors.

\subsection{General-domain Multi-modal Models: our method on CLIP features}
\label{appendix-clip}
Although our method works well in both small and large scale datasets, there are still two interesting aspects that we would like to explore. \textbf{1)} In our approach, self-supervised models need to be re-trained for each new dataset, which is time-consuming and could potentially delay the schedule for data annotation in real-world industry. \textbf{2)} Unsupervised models do not model semantic information explicitly, which may lead to confusion that could potentially be mitigated (e.g. datasets with varying intra-class variance will take regions of different sizes and may be treated differently in an unexpected way).

To address these issues, we put our focus on a large pretrained model that encodes semantic information. 
Fortunately, the availability of large-scale text-image pairs online makes it possible to train a large-scale model that encodes images in the general domain with semantic information. 
In this paper, we make use of publicly-available CLIP \cite{radford2021learning} models, a large-scale collection of models trained on Internet-crawled data with a wide general domain and use CLIP's image model as feature extractor.

Using models trained on multi-modal datasets resolves the above issues. Even though CLIP is never trained on our target dataset, nor does the categories in its training set match the dataset we are using, using it to select does not degrade our performance of sample selection and labeling pipeline. This indicates that the effectiveness of our label selection does not necessarily depend on whether the same pretrained model is used in the downstream task. In addition, we observe that such substitution even helps with a slightly larger annotation budget, demonstrating the effectiveness of making use of semantic information. Since we only perform inference on the CLIP model, the whole sample selection process could complete in \textit{0.5 hours} on a commodity server using one GPU, indicating the possibility of our methods without delaying the schedule of human annotation or modifying the annotation pipeline and enables it to be used by industry on real-world dataset collection.

Note that although CLIP supports zero-shot inference by using text input (e.g. class names) to generate weights for its classifier, it is not always possible to define a class with names or even know all the classes beforehand. Since we only make use of the image part of the CLIP model, we do not make use of prior text information (e.g. class descriptions) that are sometimes available in the real world. We leave better integration of our methods and zero-shot multi-modal models to future work.

\def\figHyperparamCombined#1{
    \begin{figure*}[#1]
    \centering
    \centering
    \includegraphics[width=0.96\textwidth]{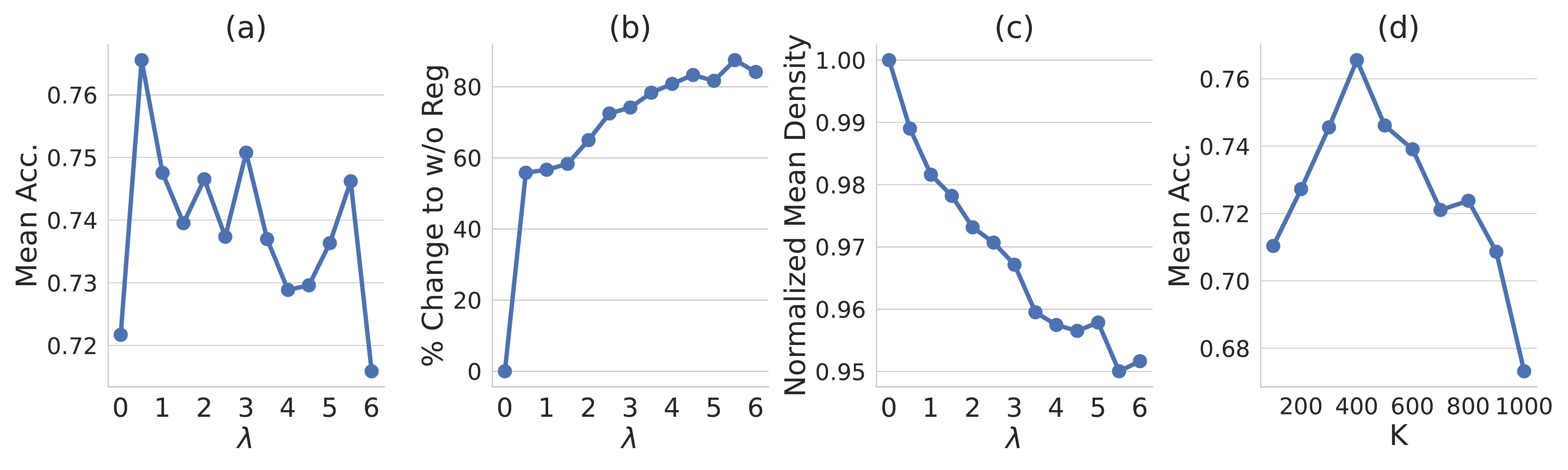}
    \caption{Effect of different hyperparameters, $\lambda$ (Fig. a,b,c) and $k$ (Fig. d) on CIFAR-10 with SimCLRv2-CLD. $\lambda$ balances representative and uniformity across the feature space. Larger $\lambda$ indicates stronger regularization that pushes more selections to be different but potentially selects less individually representative samples, or vice versa. Larger $k$ indicates that we are taking more neighbors into account when estimating the representativeness. Thanks to our stable formulation for density estimation, we found the optimal $k=400$ on CIFAR-10 also work \textit{consistently well} on CIFAR-100 and MedMNIST \cite{yang2021medmnist}, indicating the hyperparameter's insensitivity to number of classes, number of images in each class, and image domains.
    }
    \label{fig:hyperparam}
    \end{figure*}
}

\def\tabHyperparametersUSL#1{
    \begin{table}[#1]
    \centering
    \tablestyle{3pt}{1.0}
    \normalsize
    \tablefontsmall{
    \begin{tabular}{l|c|c|c||l|c|c}
    \shline
        & \multicolumn{3}{c||}{Small-scale Dataset} &    & \multicolumn{2}{c}{Large-scale Dataset} \\
    \hline
    Hyperparam & CIFAR-10 & CIFAR-100 & MedMNIST   & Hyperparam & ImageNet-100 & ImageNet  \\
    \shline
    $k$ in $k$NN & \multicolumn{3}{c||}{400}         & $k$ in $k$NN & \multicolumn{2}{c}{20} \\
    $m_\text{reg}$ & \multicolumn{3}{c||}{0.9}       & Horizon & \multicolumn{2}{c}{64} \\
    $\alpha$, $\lambda$ & \multicolumn{3}{c||}{0.5, 0.5($\leq$ 100 samples) /} & $\alpha$, $\lambda$ & \multicolumn{2}{c}{0.5, 1.5} \\
     & \multicolumn{3}{c||}{1.0, 1.0($>$ 100 samples)} &  & \multicolumn{2}{c}{} \\
    Iteration $l$ & \multicolumn{3}{c||}{10}      & & \multicolumn{2}{c}{} \\
    \shline
    \end{tabular}
    }
    \caption{A list of hyperparams used in our USL experiments. The hyperparameters are slightly different for small-scale and large-scale datasets due to the introduction of regularization horizon in selective labeling in large-scale datasets. Following \cite{sohn2020fixmatch}, we use different sets of hyperparameters for small-scale and large-scale datasets.}
    \label{tab:hyperparameters-USL}
    \end{table}
}

\subsection{Hyperparameter Analysis}
\label{appendix-hyperparam}
\figHyperparamCombined{t!}
\tabHyperparametersUSL{t!}

We focus on two hyperparameters in the analysis: $\lambda$, the weight for regularization, and $k$, the number of neighbors we use for $k$NN, in Fig. \ref{fig:hyperparam}. We use CIFAR-10 with SimCLRv2-CLD in a setting with a budget of 40 samples.

For hyperparam $\lambda$, we evaluated label selections with different $\lambda$ values used in regularization. In the experiments, we select $\lambda$, ranging from 0 to 6 in a 0.5 increment, where 0 indicates no regularization and larger $\lambda$ indicates a stronger regularization. We then evaluate the mean accuracy from 6 runs (using 2 runs per seed and 3 seeds per setting), the percent of samples that are different when compared to without regularization (i.e., $\lambda=0$), and mean density normalized w.r.t. without regularization. We observe that as $\lambda$ gets larger, we select more different samples compared to without regularization, which indicates stronger adjustment. This comes with higher accuracy as we have more uniformity. As a trade-off, we could not sample from area which has as high density as before because selecting samples from that area leads to selections that are close to each other, leading to a high penalty. Here, uniformity and representativeness show a trade-off and the optimal choice is to balance each other at $\lambda$ around 0.5. When $\lambda$ is much greater than 0.5, outlier samples that are as far away as possible from other selections are chosen without considering whether the selected samples are representative, which leads to much lower accuracy. 

For hyperparam $k$, we find that using a larger $k$ contributes to a better representation estimation by considering more neighbors. Thanks to our formulation that considers not only the $k^\text{th}$ sample for density estimation but the distance with all the $k$ nearest neighbors, we found that our algorithm's choice for $k$ is very generalizable: we found the optimal $k$ for CIFAR-10 to be 400, and found that $k=400$ also performs very well on CIFAR-100 and MedMNIST without \textit{any} tuning, which indicates our hyperparam's insensitivity in the number of classes, number of samples per class, and the dataset domain. Similarly, for larger scale datasets with higher image resolution and lower sample noise, we find that simply set $k=20$ leads to good performances on both 100 classes ImageNet and the full ImageNet with 1000 classes.



\subsection{Additional Discussions on Related Work}
\subsubsection{Related Work About Self-supervised Learning.}
Self-supervised Learning learns representations transferable to downstream tasks without annotations \cite{wu2018unsupervised, grill2020bootstrap}. \textit{Contrastive learning} \cite{wu2018unsupervised, he2020momentum, chen2020simple, wang2021unsupervised} learns representations that map similar samples or different augmentations of the same instance close  and dissimilar  instances apart.
\textit{Similarity-based} methods \cite{grill2020bootstrap} learn representations without negative pairs by predicting the embedding of a target network with an online network.
\textit{Feature learning with grouping} \cite{yang2010image, xie2016unsupervised, caron2018deep, zhuang2019local, caron2020unsupervised, wang2021unsupervised} respects the natural grouping of data by exploiting clusters in the latent representation. 
We study unlabeled data in a unsupervisedly learned feature space, due to its high quality and low feature dimensions.

We make use of the high-quality representations and dimensionality-reduction property in self-supervised learning to facilitate sample selection.

Using the representation learned with unsupervised learning as the feature space of selecting labels has two main advantages: 1) Without leveraging any labeled data, self-supervised learning could generate high-quality representations for many downstream tasks. 2) It relieves us from dealing with high-dimensional feature, due to relatively low dimension of output feature.

\subsubsection{Related Work About Our Deep Counterpart of $k$-Means Clustering in USL-T.}

In USL-T, we proposed a deep counterpart of $k$-Means clustering method that optimizes a unified global objective, which has an effect similar to performing $k$-Means clustering but trains the feature space and cluster assignment jointly. We would like to offer a comparison to main related work of our proposed method that also involves $k$-Means clustering variants or deep clustering designs to jointly learn features and cluster assignments.

Deep $k$-Means \cite{fard2020deep} proposed a differentiable metric on auto-encoder features to perform clustering. However, \cite{fard2020deep} only scales to small datasets such as MNIST, while our formulation scales to datasets with around a million images. In addition, while \cite{fard2020deep} requires a reconstruction term in the loss function to support clustering throughout training, our clustering loss, i.e. global loss, requires only one term that matches the soft and hard distribution. Note that although we also employ a local loss to kick-start the training process due to our confidence-based filtering function, the local loss could be turned off early in the training process without negative impacts on the clustering quality.

DeepCluster \cite{caron2018deep} also jointly learns features and cluster assignments with $k$-Means clustering. However, our work and \cite{caron2018deep} have different contributions: while our work adapts $k$-Means clustering to a unified loss formulation, \cite{caron2018deep} simply uses the traditional $k$-Means as a part of their algorithm to provide supervision for feature learning. In other words, while we directly back-propagation from our adapted $k$-Means algorithm as a global loss term, \cite{caron2018deep} uses traditional $k$-Means that does not supply gradients and employs another branch for back-propagation and learning purpose. In addition, \cite{caron2018deep} applies $k$-Means on features of all data, which means all feature needs to be stored prior to clustering, whereas we apply our loss formulation on the current minibatch, which adheres to popular deep learning methods that do not require storing all features from the dataset. USL-T, with end-to-end backprop to jointly solve for cluster assignments and model optimization, is much more \textit{scalable} and \textit{easy to implement}. 

Recent works \cite{chen2020unsupervised,caron2021emerging} on implementing clustering in a deep-learning framework incorporate neural networks that output a categorical distribution through a softmax operator at the end of the network. In addition, DINO \cite{caron2021emerging} also considers the potential collapses and proposes a carefully-designed loss function as mitigation. However, both methods mainly intend to learn a feature space/attention map used for downstream applications instead of acquiring a set of samples that are representative and diverse. Since the feature/attention maps are the goal of designing these methods, the \around60k clusters produced by DINO are extremely sparse and highly imbalanced. For ImageNet-1K, \around90\% clusters from a fully-trained DINO model are \textit{empty} (vs \around0 in USL-T). Therefore, the user has little control over the number of selections in DINO. Empirically, we observe that SSL models optimized on them perform much worse. Furthermore, in our unsupervised selective labeling setting, these methods require \textit{full retraining} when the downstream budget changes. In contrast, USL-T, which leverages self-supervised pretraining, could complete a selection with new budget constraint with substantially less compute.

Also recently, SCAN/NNM/RUC \cite{van2020scan,dang2021nearest,park2021improving} propose image clustering methods that intend to be evaluated with hungarian matching from image clusters to semantic classes. However, such methods are compared against semi-supervised learning methods \cite{van2020scan} instead of being proposed to be combined with semi-supervised learning methods. First of all, these methods make use of all labels on validation split to perform hungarian matching, which implicitly makes use of all the label information. In contrast, our USL/USL-T pipeline follows the standard assumption of semi-supervised learning that no labels, except the ones in the labeled dataset, are leveraged by the method to get the final classification. Furthermore, these methods generally do not generalize well to large datasets such as ImageNet\cite{ILSVRC15}, with \cite{dang2021nearest,park2021improving} working on smaller datasets and \cite{van2020scan} severely underperforms on ImageNet when a very limited amount (as low as 0.2\%) of data labels are available.

\subsection{Overview on Unsupervised Representation Learning}
\label{appendix-representation-learning}
In self-supervised learning stage, we aim to learn a mapping function $f$ such that in the $f(x)$ feature space, the positive instance $x_i'$ is attracted to instance $x_i$, meanwhile, the negative instance $x_j$ (with $j\!\neq\! i$) is repelled, and we model $f$ by a convolutional neural network, mapping $x$ onto a $d$-dimensional hypersphere with $\normltwo$ normalization.
To make a fair comparison with previous arts \cite{cai2021exponential}, we use MoCo v2 \cite{chen2020improved} to learn representations on ImageNet with the instance-centric contrastive loss:
\begin{align}
    \begin{split}
    &C\left(f_i,f^+_i, f^-_{\neq i}\right) \!=\! \\
    &
    -\log \frac%
    {\exp(<f_i,f_i^+>\!/T)}
    {\exp(<f_i,f_i^+>\!/T)+\sum\limits_{j\neq i}\exp(<f_i,f_j^->\!/T)}\\[-6pt]
    \end{split}
    \label{eqn:instance-contrast}
\end{align}
where $T$ is a regulating temperature.
Minimizing it can be viewed as maximizing the mutual information (MI) lower bound between the features of the same instance \cite{hadsell2006dimensionality, oord2018representation}.
For experiments on ImageNet, the MoCo model pre-trained for 800 epochs is used for initializing the SSL model, as in \cite{cai2021exponential}.

The feature spaces of CIFAR-10 data we work on are extracted with CLD \cite{wang2021unsupervised}. The instance-group contrastive loss is added in symmetrical terms over views $x_i$ and  $x_i'$:

\begin{align}
    \begin{split}
    L(f;T_I,T_G,\lambda) \!=\! \sum_{i} ( & C(f_I(x_i),v_i,v_{\neq i};T_I)  \\
      + & C(f_I(x_i'),v_i,v_{\neq i};T_I)) \\
      + \lambda\sum_{i} ( & C(f_G(x_i'), M_{\Gamma(i)}, M_{\neq\Gamma(i)};T_G)  \\
      + &  C(f_G(x_i), M'_{\Gamma'(i)}, M'_{\Gamma'(i)};T_G) ) \\
    \end{split}
    \label{eqn:cld}
\end{align}

Cross-level discrimination of Eqn.~\ref{eqn:cld} (second term) can be understood as minimizing the cross entropy between hard clustering assignment based on $f_G(x_i)$ and soft assignment predicted from $f_G(x_i')$ in a different view, where $f_G$ ($f_I$) is instance (group) branch, and $M_{\Gamma(i)}$ denotes the cluster centroid of instance $x_i$ with a cluster id $\Gamma(i)$ \cite{wang2021unsupervised}. Empirically, we found that CLD has great feature quality on CIFAR-10 and better respects the underlying semantic structure of data. To be consistent with original FixMatch settings, our semi-supervised learner on CIFAR-10 is trained from scratch, without using pretrained weights.


\subsection{Discussions About Run Time}
\label{appendix-run-time}
CLD only takes about 4 hours to train on CIFAR-10 on a single GPU and sample selection with USL takes less than 10 minutes on CLD with one GPU. This takes significantly less GPU-time than FixMatch (120 GPU hours with 4 GPUs), which is, in turn, much less than the time for labelling the whole dataset of 50000 samples. On ImageNet, MoCo takes about 12 days with 8 GPUs to achieve 800 epochs \cite{he2020momentum}, our algorithm takes about an hour on one GPU to select samples for both 1\% and 0.2\% labels, and in the end, FixMatch takes another 20 hours on 4 GPUs to train. Although it sounds like we are using a lot of compute time just to train a self-supervised learning model for selecting what samples to annotate, the fact is that FixMatch requires a self-supervised pretrained checkpoint to work well when the number of labeled samples is low, as shown in \cite{cai2021exponential}, even \textit{without} our selection methods. The only compute overhead introduced is the sample selection process, which is \textit{negligible} when compared to the other two stages. In addition, shown in our experiments, CLIP, as a model trained on a general and diverse image-text dataset, could also be used to select samples with comparable and sometimes even better samples to label. This indicates that the self-supervised training stage is not required in our method for sample selection when a model that sufficiently covers the current domain is available.


\subsection{Experiment Setup and Implementation Details}
\subsubsection{CIFAR-10/100.}
\label{appendix-setup-cifar}
For \textit{\textbf{FixMatch}} experiments, to maintain consistency with the original FixMatch \cite{sohn2020fixmatch}, we evaluate FixMatch trained on CIFAR-10 with $2^{20}$ steps in total. To illustrate the ability of our algorithm to select informative samples, we evaluate both approaches on an extremely low-label setting from 40 samples to 250 samples in total (4 shots to 25 shots per class on average).
Since the original FixMatch is evaluated with stratified sampling on CIFAR-10, we also retrain FixMatch with random sampling with the same number of samples in total as a fair comparison. 
Unless otherwise stated, we train FixMatch with a learning rate of 0.03, and weight decay $10^{-3}$ on 4 Nvidia RTX 2080 Ti GPUs with batch size 64 for labeled samples and with $2^{20}$ steps in total. All experiments are conducted with the same training and evaluation recipe for fair comparisons.


For \textit{\textbf{SimCLRv2-CLD}}, we also evaluate our algorithm on two-stage SSL method SimCLRv2-CLD based on transfer learning \cite{chen2020big} by fine-tuning the linear layer of a ResNet-18 pretrained with self-supervised learning algorithm CLD \cite{wang2021unsupervised}. Specifically, we fine-tune the linear layer on a ResNet-18 trained with CLD \cite{wang2021unsupervised}.
Since it is easy for the network to overfit the few-shot labeled samples, we freeze the backbone and fine-tune only the linear layer. 
We use SGD with learning rate $0.01$, momentum $0.9$, and weight decay $10^{-4}$ for 5 epochs because longer training time will lead to over-fitting.

For \textit{\textbf{MixMatch}}, we train for 1024 epochs with 1024 steps per epoch, following the original recipe. For each of labeled and unlabeled dataset, we use a batch size 64. We use a learning rate $0.002$ with Adam optimizer. The results are evaluated with an weighted EMA module that has decay rate $0.999$ and are averaged over 20 last epochs in the test set. For \textit{\textbf{CoMatch}}, we train for 512 epochs with official code and the default recipe.


\subsubsection{ImageNet-100/1k.}
We evaluate our method on ImageNet \cite{ILSVRC15} with approximately 1 million images and 1000 classes and ImageNet100 \cite{van2020scan} with 100 classes from ImageNet.

We use different sets of hyperparameters in large-scale datasets, as described in Sec.~\ref{appendix-hyperparam}. For USL, we set a finite horizon in the large datasets to make evaluation feasible. Instead of using a momentum in regularization, we run one iteration without momentum for faster selection for both USL-MoCo and USL-CLIP. For USL-T, we freeze the backbone due to computational limitations in large-scale datasets. To maintain consistency with contrastive learning, we use $L^2$-normed linear layer as the last layer. We also initialize the last layer with features from random samples to greatly speed up convergence. As we find that providing only one label of the sample with top confidence in each cluster does not effectively convey the grouping information in low-shot SSL, we instead query the sample with top density in each cluster and annotate the 20 samples with max density using the label of the requested sample as the pseudo-label. We reduce the iterations in downstream for fair comparison. Similar to \cite{caron2018deep}, we re-initialize centroids of tail or empty clusters to the perturbed centroid of the head cluster. Since this creates centroid competitions that reduce confidence value of the head cluster, we do not make use of confidence value and calculate global loss on all samples by default.

For \textit{\textbf{SimCLRv2}} experiments, we fine-tune the released SimCLRv2 checkpoint on baseline selections and our selections. Due to differences in codebases, our reproduced accuracy differs from the one reported on SimCLRv2 paper pretrained and fine-tuned on Cloud TPUs. Therefore, we report our reproduced baseline which is fine-tuned on stratified selection for fair comparison on the effectiveness on sample selection with our method with SimCLRv2. Similar to other ImageNet experiments, we use 1\% and 0.2\% labeled data. The labeled data selection is the same for SimCLRv2 as for our experiments in FixMatch. To keep the recipe as close to the original implementation as possible, we use ResNet-50\cite{he2016deep} with LARS \cite{you2017large} optimizer with learning rate 0.16 and use globally synced batch normalization \cite{ioffe2015batch}. While \cite{chen2020big} employs a batch size of 1024, we found that under the same number of training epochs, setting batch size to 512 leads to better optimization outcomes on ImageNet-1k in our codebase. This is potentially due to more iterations with the same number of training epochs. Therefore, we set batch sizes to 512 on ImageNet-1k. In addition, to reduce the memory footprint, we use mixed precision training, which has no significant impacts in training accuracy in our observation. We use 60 epochs for 1\% task, following \cite{chen2020big}. We use 240 epochs for 0.2\% task without learning rate decay for all selection methods, since we find this gives better results.


For \textit{\textbf{FixMatch}} experiments, we use either a MoCo-pretrained model with Exponential Moving Average Normalization (EMAN) \cite{cai2021exponential} or a CLIP ViT/16 model \cite{dosovitskiy2020image} to select samples to annotate. For ImageNet 1\%, we run $K$-Means clustering with 12900 clusters, which is slightly more than 12820 samples we are selecting, because we observe that there will sometimes be empty clusters. To maintain consistency with prior works, we use the same setting as in \cite{cai2021exponential} besides the selection of input labeled data, unless otherwise stated. Specifically we use a learning rate of 0.03 with weight decay $10^{-4}$ and train a ResNet-50 for 50 epochs with a MoCo \cite{he2020momentum} model as pretrained model. We perform learning rate warmup for 5 epochs and decay the learning rate by 0.1 at 30 and 40 epochs. Note that we load MoCo model as the pretrained model for FixMatch for USL-CLIP for fair comparison so that the only difference between MoCo and CLIP setting is the sample selection.

\subsection{Details About the Toolbox}
Currently, different SSL/AL/SSAL implementations use different formats to represent what samples to label, making selective labeling methods hard to benchmark. Therefore, to standardize the benchmark, we intend to release a toolbox that includes implementations of following methods:

\begin{itemize}
  \item Our selective labeling methods: USL and USL-T
  \item SSL methods that we experimented on, including SimCLRv2 \cite{chen2020big}, SimCLRv2-CLD \cite{chen2020big, wang2021unsupervised}, FixMatch \cite{sohn2020fixmatch}, CoMatch \cite{li2021comatch}, MixMatch \cite{berthelot2019mixmatch}, that are adapted with the unified dataset representation as illustrated below
  \item Several AL/SSAL methods that we use as baselines
\end{itemize}

USL, USL-T, SSL methods, and the AL/SSAL baselines in the toolbox are implemented with \textit{unified} data loaders that comes with \textit{standard} and simple file formats to indicate what samples are requested to be labeled and what samples are unlabeled. We provide out-of-the-box data loaders that use this unified file representation for datasets used in our experiments. In addition, the training recipe will be provided for the methods mentioned above to facilitate future research and fair comparisons.


\clearpage
%
%
\bibliographystyle{splncs04}
\bibliography{egbib}
\end{document}